\numberwithin{equation}{section}
\theoremstyle{plain}
\newtheorem{theorem}{Theorem}
\newtheorem*{remark}{Remark}
\newtheorem{lemma}{Lemma}
\newtheorem{corollary}{Corollary}
\newtheorem{proposition}[theorem]{Proposition}
\newcommand{\ts}{\theta^*}
\newcommand{\E}{\mathbb{E}}
\renewcommand{\P}{\mathbb{P}}
\renewcommand{\H}{\mathcal{H}}
\newcommand{\B}{\mathcal{B}}
\newcommand{\D}{\mathcal{D}}
\renewcommand{\th}{\hat{\theta}}
\newcommand{\tn}{\th_0}
\newcommand{\R}{\mathbb{R}}
\DeclareMathOperator*{\argmax}{\arg\!\max}
\begin{document}

\begin{frontmatter}
\title{Statistical Guarantees for Estimating the Centers of a Two-component Gaussian Mixture by EM}
\runtitle{Statistical Guarantees for Gaussian Mixtures by EM}

\begin{aug}

\author{\fnms{Jason M. Klusowski}\ead[label=e1]{jason.klusowski@yale.edu}}
\and
\author{\fnms{W. D. Brinda}\ead[label=e2]{william.brinda@yale.edu}}

\affiliation{Yale University, Department of Statistics \\
24 Hillhouse Avenue \\
New Haven, Connecticut, USA \\
\printead{e1}; \printead*{e2} }

\end{aug}

\begin{abstract} \\ \\
\indent Recently, a general method for analyzing the statistical accuracy of the EM algorithm has been developed and applied to some simple latent variable models [Balakrishnan et al. 2016]. In that method, the basin of attraction for valid initialization is required to be a ball around the truth. Using Stein's Lemma, we extend these results in the case of estimating the centers of a two-component Gaussian mixture in $d$ dimensions. In particular, we significantly expand the basin of attraction to be the intersection of a half space and a ball around the origin. If the signal-to-noise ratio is at least a constant multiple of $ \sqrt{d\log d} $, we show that a random initialization strategy is feasible. 
\end{abstract}

\begin{keyword}[class=MSC]
\kwd[Primary ]{62F10}
\kwd[; secondary ]{62F15, 68W40}
\end{keyword}

\begin{keyword}
\kwd{sample}
\kwd{EM algorithm, Gaussian mixture model, Stein's lemma, high-dimensional parametric statistics}
\end{keyword}
\end{frontmatter}

\section{Introduction}

The expectation-maximization (EM) algorithm has had a long and rich history since the seminal paper of Dempster et al. \cite{Dempster1977}. Indeed, even earlier analogs had been used in incomplete-data problems \cite{Beale1975}. Modern applications are commonly seen in latent variable models or when the data is missing or corrupted. Although the EM algorithm is known to have desirable monotonicity and convergence properties \cite{Wu1983}, such features may fail when the likelihood function is multi-modal.

The purpose of this paper is to extend a result from \cite{Balakrishnan2014}, where guaranteed rates of convergence of the EM iterates are given for various simple models. These results all rely on initializing the algorithm in a ball around the unknown parameter of interest. We consider the case of estimating the centers of a two-component Gaussian mixture and enlarge the basin of attraction to the intersection of a half space and a large ball around the origin. In accordance with other work \cite{Dasgupta2007}, we also show that if the degree of separation of the centers scales with the dimension, the basin of attraction is large enough to ensure that random initialization from an appropriately scaled multivariate normal distribution is practical.

In Section~\ref{sec:em-iterates}, we briefly review the EM algorithm and derive the exact form of the operator for our Gaussian mixture example. Section~\ref{sec:error-bounds} contains our main results. We devise a suitable region for which the population EM operator is stable and contractive toward the true parameter value. We then find bounds on the error of the sample EM operator over the specified region. Together, these facts allow us to derive a bound (with high probability) on the error of the sample iterates when the initializer is in the region. Finally, Section~\ref{sec:initialization} introduces a random initialization strategy that is shown to give a large probability to the region for which our error bound applies. The more technical proofs are relegated to the Appendix (Section~\ref{sec:appendix}).

\section{EM iterates}\label{sec:em-iterates}

We will consider the problem of estimating the centers of a two-component spherical Gaussian mixture
\begin{align*}
Y \sim \tfrac{1}{2} N(\ts, \sigma^2 I_d) + \tfrac{1}{2} N(-\ts, \sigma^2 I_d).
\end{align*}
Notice that we require the two component means to sum to zero. Realize that the corresponding model with arbitrary means can be transformed into this form by subtracting the population mean (or approximately transformed by subtracting the sample mean).

The log likelihood of a mixture model is typically difficult to maximize because of the summation inside the logarithm. Expressed in terms of a single observation, it takes the form
\begin{align*}
\log p_\theta (y) = \log \sum_k \lambda_k p_{\theta_k} (y)
\end{align*}
However, the likelihood can be expressed as the marginal likelihood of a joint distribution that includes both the observed data and latent variables corresponding to the component labels. The log likelihood of this joint density can be expressed as a sum of logarithms.
\begin{align*}
\log p_\theta (y, z) &= \log \prod_k [\lambda_k p_{\theta_k} (y)]^{z_k}\\
 &= \sum_k z_k \log \lambda_k p_{\theta_k} (y)
\end{align*}
where the marginal density $p_\theta (z)$ is multi-Bernoulli.

The EM algorithm is a common tool for optimizing the log likelihood when latent variables are present. It proceeds by iteratively maximizing the expected joint log likelihood given the data and current parameter values.
\begin{align*}
\th_{t+1} \leftarrow \argmax_{\theta' \in \Theta} \E_{Z | y, \th_t} \log p_{\theta'} (y, Z)
\end{align*}
In the case of mixture models, the objective function simplifies to
\begin{align*}
\E_{Z | y, \theta} \log p_{\theta'} (y, Z) &= \sum_k \E [Z_k | y, \theta] \log \lambda_k' p_{\theta_k'} (y)
\end{align*}
where both the weights and the components' parameters are encoded in $\theta'$. Because each $Z_k$ is an indicator variable, the expectation is a probability. By Bayes theorem,
\begin{align*}
\E [Z_k | y, \theta]  &= \P [Z_k=1 | Y=y, \theta]\\
 &= \frac{\P [Z_k=1, Y=y | \theta]}{\P [Y=y | \theta]}\\
 &= \frac{\lambda_k p_{\theta_k} (y)}{\sum_j \lambda_j p_{\theta_j} (y)}
\end{align*}
These expectations sum to one.\\

For the simple Gaussian mixture that we will analyze, the expectation of $Z_1$ is
\begin{align*}
\E [Z_1 | y, \theta] &= \frac{e^{-\| y - \theta \|^2/2\sigma^2}}{e^{-\| y - \theta \|^2/2\sigma^2} + e^{-\| y + \theta \|^2/2\sigma^2}}\\
 &= \frac{1}{1 + e^{-2\langle \theta, y \rangle/\sigma^2}}\\
 &= \omega(\tfrac{\langle \theta, y \rangle}{\sigma^2})
\end{align*}
where $\omega$ denotes the [horizontally stretched] logistic function
\begin{align}\label{eq:logistic-definition}
\omega(t) := \frac{1}{1 + e^{-2t}}.
\end{align}
Likewise, the expectation of $Z_2$ is $\omega(-\tfrac{\langle \theta, y \rangle}{\sigma^2})$, which is also $1-\omega(\tfrac{\langle \theta, y \rangle}{\sigma^2})$. Using this identity, we can express the EM algorithm's objective function as
\begin{align*}
Q_y(\theta' | \theta) &:= \sum \E [Z_k | y, \theta] \log \lambda_k' p_{\theta_k'} (y)\\ &= -\tfrac{1}{2} \omega(\tfrac{\langle \theta, y \rangle}{\sigma^2}) \| y - \theta' \|^2 - \tfrac{1}{2}(1-\omega(\tfrac{\langle \theta, y \rangle}{\sigma^2})) \| y + \theta' \|^2\\
 &= - \tfrac{1}{2} \| \theta' \|^2 - (1 - 2 \omega(\tfrac{\langle \theta, y \rangle}{\sigma^2})) \langle \theta', y \rangle - \| y \|^2
\end{align*}

The gradient with respect to the first argument is
\begin{align}\label{eq:q-gradient}
\nabla Q_y(\theta' | \theta) = - \theta' - (1-2\omega(\tfrac{\langle \theta, y \rangle}{\sigma^2}))y
\end{align}
The critical value $2y \omega(\tfrac{\langle \theta, y \rangle}{\sigma^2})) - y$ is the maximizer.

With an iid sample of size $n$, the overall objective function $Q_n$ is simply the sum of the single-observation objective functions. This leads to the update
\begin{align*}
\th_{t+1} \leftarrow M_n(\th_t)
\end{align*}
where the operator mapping from one iteration to the next is
\begin{align*}
M_n(\theta) := \frac{2}{n} \sum y_i \omega (\tfrac{\langle y_i, \theta \rangle}{\sigma^2}) - \frac{1}{n} \sum y_i
\end{align*}
Its population counterpart will be denoted $M$.
\begin{align*}
M(\theta) := 2 \E Y \omega (\tfrac{\langle Y, \theta \rangle}{\sigma^2})
\end{align*}
The population objective function $Q$ is the expectation of $Q_Y$. The true parameter value $\ts$ (or $-\ts$) maximizes $Q$ and is a fixed point of $M$ \cite{McLachlan2008}.\\

Throughout the remainder of this paper, $\phi_\theta$ denotes the density of $N(\theta, \sigma^2 I_d)$, and $f$ is the symmetric mixture $\tfrac{1}{2} \phi_{\ts} + \tfrac{1}{2} \phi_{-\ts}$. We will use $X$, $Y$, and $Z$ to represent generic random variables distributed according to $\phi_{\ts}$, $f$, and $N(0, 1)$ respectively.  We define the ``signal-to-noise ratio" $s := \| \ts \|/\sigma$. We will continue to use $\omega$ to denote the [horizontally stretched] logistic function (\ref{eq:logistic-definition}) and sometimes we use the shorthand
\begin{align*}
\omega_{\theta}(x) := \omega(\tfrac{\langle \theta, x \rangle}{\sigma^2}).
\end{align*}

Additionally, we will make repeated use of the following tail bound for the standard normal variable. 
\begin{align}\label{eq:improved-chernoff}
\P(Z > t) \leq \frac{1}{2}e^{-t^2/2}
\end{align}
for $t \geq 0$. It is one half times the Chernoff bound and can be deduced from Formula 7.1.13 in \cite{Abramowitz1964} via inequality (7) from \cite{Cook2009}.

\section{Iteration error bounds}\label{sec:error-bounds}

Two regions of $\R^d$ will be crucial to our analysis. Define the half-space $\H_a$ and ball $\B_r$ by
\begin{align*}
\H_a := \{ \theta \, | \, \langle \theta, \ts \rangle \geq a \| \ts \|^2 \} \qquad \text{and} \qquad \B_r := \{ \theta \, | \, \| \theta \| \leq r \| \ts \| \}
\end{align*}
where we require $a \in (0, 1)$ and $r \geq 1$. Specifically, we will analyze the behavior of the EM iterations that take place in the intersection of these regions $\D_{a,r} := \H_a \cap \B_r$. (In two-dimensions, this intersection is ``D''-shaped.) Some of the results below are stated for general $a$, but for simplicity, the main analysis considers specifically $a=1/2$.\\

Our essential population result is that $M$ is contractive toward $\ts$ in $\D_{1/2, r}$ as long as $r$ is in a valid range.

\begin{theorem} \label{contractive}
If $ c_1 \leq r \leq c_2s/\sqrt{\log(es)}$, then $\exists \gamma < 1$ such that
\begin{align*}
\| M(\theta) - \ts \| \leq \gamma \| \theta - \ts \|
\end{align*}
for all $\theta \in \D_{1/2,r}$.
\end{theorem}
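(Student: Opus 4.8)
The plan is to bound $\|M(\theta) - \ts\|$ by showing the population EM operator $M$ shrinks distances toward $\ts$. Since $\ts$ is a fixed point, $M(\ts) = \ts$, so I can write $M(\theta) - \ts = M(\theta) - M(\ts) = 2\,\E\left[ Y\left(\omega_\theta(Y) - \omega_{\ts}(Y)\right) \right]$. The strategy is to apply the mean value theorem / fundamental theorem of calculus to the scalar map $u \mapsto \omega(u)$ along the segment joining $\langle \ts, Y\rangle/\sigma^2$ and $\langle \theta, Y\rangle/\sigma^2$, which produces a factor $\omega'(\xi)\,\langle \theta - \ts, Y\rangle/\sigma^2$. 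This expresses $M(\theta)-\ts$ as roughly $\frac{2}{\sigma^2}\,\E\left[ \omega'(\xi_Y)\, Y\, \langle Y, \theta - \ts\rangle \right]$, i.e. (up to the nuisance $\xi_Y$ depending on $Y$) an expectation against a matrix $\E[\omega'(\cdot) YY^\top]/\sigma^2$ acting on $\theta - \ts$.

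\smallskip

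\textbf{Where Stein's Lemma enters.} To get a sharp handle on expectations of the form $\E[g(Y) Y]$ or $\E[g(Y) YY^\top]$ I would use Stein's identity for Gaussians (applied to each mixture component separately, since $f$ is a mixture of two Gaussians): for $X \sim N(\mu,\sigma^2 I_d)$, $\E[(X-\mu)g(X)] = \sigma^2 \E[\nabla g(X)]$. This converts the troublesome $Y\,\omega_\theta(Y)$ terms into expectations of $\nabla \omega_\theta$, which are again logistic-derivative terms but now without the extra $Y$ factor — taming the heavy contribution from the radial direction. The payoff is a bound of the form $\|M(\theta) - \ts\| \le \kappa\,\|\theta - \ts\|$ where $\kappa$ is controlled by quantities like $\E[\omega'(\langle \theta, Y\rangle/\sigma^2)]$, $\E[|\omega''|]$, together with second-moment factors; the task is then to show $\kappa < 1$ uniformly over $\theta \in \D_{1/2,r}$.

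\smallskip

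\textbf{Using the geometry of $\D_{1/2,r}$.} The half-space constraint $\langle \theta, \ts\rangle \ge \tfrac12\|\ts\|^2$ is what keeps $\theta$ from being near the "bad" fixed point $-\ts$ (or the origin), ensuring the logistic argument $\langle \theta, Y\rangle/\sigma^2$ is typically aligned with $\langle \ts, Y\rangle/\sigma^2$ rather than anti-aligned; this is exactly where the lower bound $r \ge c_1$ is needed, so that $s$ (and hence the separation) is bounded below and the logistic is operating in its informative regime. The ball constraint $\|\theta\| \le r\|\ts\|$ prevents $\langle \theta, Y\rangle/\sigma^2$ from blowing up in a way that would make $\omega'(\langle\theta,Y\rangle/\sigma^2)$ concentrate its mass on atypical $Y$; the upper bound $r \le c_2 s/\sqrt{\log(es)}$ is the precise threshold at which the tail contribution (controlled via the normal tail bound \eqref{eq:improved-chernoff}) stays negligible relative to the bulk. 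Concretely I would split each relevant expectation into an event where $\langle \ts, Y\rangle/\sigma^2$ is of typical size $\asymp s$ and a complementary tail event, bound the tail using \eqref{eq:improved-chernoff} and the crude bounds $0 \le \omega \le 1$, $|\omega'|\le 1$, and on the bulk event use the exponential decay of $\omega'$ away from $0$ to show the matrix $\frac{2}{\sigma^2}\E[\omega'(\cdot)YY^\top]$ has operator norm bounded by a constant strictly below $1$.

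\smallskip

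\textbf{The main obstacle} I anticipate is controlling the contraction factor uniformly over the \emph{entire} region $\D_{1/2,r}$ — unlike the ball-around-$\ts$ analysis in \cite{Balakrishnan2014}, here $\theta$ can be far from $\ts$ (distance up to $\asymp r\|\ts\|$), so the naive Lipschitz/Taylor bound on $\omega$ is not good enough and one genuinely needs the cancellation from Stein's Lemma plus a careful, $\theta$-dependent truncation argument to keep the constant below $1$. The delicate balancing act is choosing the truncation level so that simultaneously (i) the tail term is controlled using only that $r \le c_2 s/\sqrt{\log(es)}$, and (ii) the bulk term's constant does not degrade as $r$ grows; getting the logarithmic factor in the upper bound on $r$ to be exactly what makes both happen is the crux of the argument. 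I would also need to track that $\gamma$ can be taken \emph{uniform} in $d$ — this is where writing everything in terms of $s$ and reducing high-dimensional expectations to one-dimensional ones (via the rotational symmetry that lets us assume $\ts$ is a coordinate axis, and Stein's Lemma to kill the orthogonal directions) pays off.
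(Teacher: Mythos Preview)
Your plan has the right ingredients---Stein's identity, an interpolation between $\theta$ and $\ts$, and exponential smallness of $\omega'$---but the paper assembles them in a different order and with sharper tools, avoiding any truncation argument. Rather than applying MVT/FTC first and then wrestling with the matrix $\tfrac{2}{\sigma^2}\E[\omega'(\xi_Y)\,YY^\top]$ (whose MVT intermediate point $\xi_Y$ is not obviously smooth enough for a second Stein step), the paper applies Stein's lemma \emph{first} to $\E[X\,\Delta\omega_\theta(X)]$ with $X\sim N(\ts,\sigma^2 I_d)$, obtaining the scalar decomposition
\[
\E[X\,\Delta\omega_\theta(X)] \;=\; \ts\,\E\Delta\omega_\theta(X) \;+\; \theta\,\E\Delta\omega'_\theta(X) \;+\; (\theta-\ts)\,\E\omega'\bigl(\tfrac{\langle\ts,X\rangle}{\sigma^2}\bigr),
\]
so the $d$-dimensional problem collapses immediately to three one-dimensional expectations. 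The differences $\E\Delta\omega_\theta$ and $\E\Delta\omega'_\theta$ are then handled not by a bulk/tail split but by a Gaussian \emph{interpolation} identity (Lemma~\ref{lemma:normal-difference-integral}): $\E\rho(X_1)-\E\rho(X_0)=\int_0^1\E\bigl[(\mu_1-\mu_0)\rho'(X_\lambda)+\tfrac12(\sigma_1^2-\sigma_0^2)\rho''(X_\lambda)\bigr]d\lambda$, which manufactures the factor $\|\theta-\ts\|$ from $|\mu_1-\mu_0|$ and $|\sigma_1^2-\sigma_0^2|$ and reduces everything to bounding $\E\omega'(\sigma_\lambda Z+\mu_\lambda)$ uniformly in $\lambda$. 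That bound is supplied in closed form by Lemma~\ref{lemma:omega-prime-expectation-bound}, $\E\omega'(\sigma Z+\mu)\le 2e^{-(\mu/\sigma)^2/2}$, together with the algebraic facts $|\omega''|\le 2\omega'$ and $|\omega'''|\le 4\omega'$. No truncation level is ever chosen; the contraction factor drops out explicitly as $\gamma(s,r)=76\,r^4 e^{-(s/r)^2/16}$, and the condition $r\lesssim s/\sqrt{\log(es)}$ is precisely what makes this $<1$. Your operator-norm route could be salvaged by using FTC (interpolating $\theta_t=(1-t)\ts+t\theta$) rather than MVT to keep everything differentiable, and then applying Stein inside the integral---but you would still face a matrix-valued second Stein step and a separate argument for directions orthogonal to $\ts$, which the paper's ``Stein first, then one-dimensional interpolation'' order sidesteps entirely. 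One minor correction: the lower bound $r\ge c_1$ does not come from the contractivity argument (which only uses $r\ge 1$); it enters through the stability Lemma~\ref{norm_bound}.
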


The proof is in Section~\ref{sec:contractive}, followed by a comparison to the general framework introduced in \cite{Balakrishnan2014}. We show that $ \gamma(s,r) := 76r^4e^{-(1/16)(s/r)^2} $.

Next, we establish that $M$ is stable in regions of the form $\D_{a,r}$ for valid $(a, r)$. In fact, we will need it to be stable with an additional margin that will be used to ensure stability of the sample operator $M_n$ with high probability.

\begin{lemma} \label{inner_prod_bound}
Assume $\theta \in \D_{a,r}$, and let $\kappa_1$ be any number in $(a, 1)$. If $r \leq \frac{as}{\sqrt{5\log(2/(1-a/\kappa_1))}}$, then
\begin{align*}
\langle M(\theta), \ts \rangle \geq (a/\kappa_1) \| \ts \|^2.
\end{align*}
\end{lemma}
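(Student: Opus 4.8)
The plan is to lower-bound $\langle M(\theta), \ts \rangle = 2\,\E\langle Y, \ts\rangle\, \omega_\theta(Y)$ directly. First I would write $Y = X$ with probability $1/2$ (drawn from $\phi_{\ts}$) and $Y = -X$ with probability $1/2$, so that by symmetry
\begin{align*}
\langle M(\theta), \ts \rangle = 2\,\E\Bigl[\langle X, \ts\rangle\bigl(\omega_\theta(X) - \omega_\theta(-X)\bigr)\Bigr] = 2\,\E\Bigl[\langle X, \ts\rangle\bigl(2\omega_\theta(X) - 1\bigr)\Bigr],
\end{align*}
using $\omega(-t) = 1 - \omega(t)$. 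Since $2\omega(t) - 1 = \tanh(t)$ has the same sign as $t$, the integrand is a product of $\langle X, \ts \rangle$ with something of the sign of $\langle \theta, X\rangle$; because $\theta \in \H_a$ these two linear functionals are positively aligned, so the expectation is positive, but I need a quantitative bound.

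The cleanest route is Stein's Lemma (the paper's titular tool). Writing $X = \ts + \sigma W$ with $W \sim N(0, I_d)$, I would apply Stein's identity to the smooth function $g(X) := \langle X, \ts\rangle\,\tanh(\langle \theta, X\rangle/\sigma^2)$, or more simply apply it coordinate-wise to pull out the derivative of $\tanh$. Concretely, $\E[W_j h(W)] = \E[\partial_j h(W)]$ gives
\begin{align*}
\E\bigl[\langle X - \ts, \ts\rangle\,\tanh(\tfrac{\langle\theta,X\rangle}{\sigma^2})\bigr] = \sigma^2\,\E\bigl[\langle\ts,\theta\rangle\,\omega'_\theta(X)\cdot\tfrac{2}{\sigma^2}\bigr]\cdot(\text{const}),
\end{align*}
so that
\begin{align*}
\langle M(\theta), \ts\rangle = 2\|\ts\|^2\,\E\bigl[\tanh(\tfrac{\langle\theta,X\rangle}{\sigma^2})\bigr] + 4\langle\ts,\theta\rangle\,\E\bigl[\omega'(\tfrac{\langle\theta,X\rangle}{\sigma^2})\bigr].
\end{align*}
The second term is nonnegative since $\omega' \geq 0$ and $\langle \ts, \theta\rangle \geq a\|\ts\|^2 > 0$ on $\H_a$, so it can be discarded. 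It then remains to show $\E\tanh(\langle\theta,X\rangle/\sigma^2) \geq a/\kappa_1$. Here $\langle\theta,X\rangle/\sigma^2 \sim N(\mu, \tau^2)$ with mean $\mu = \langle\theta,\ts\rangle/\sigma^2 \geq as^2$ and variance $\tau^2 = \|\theta\|^2/\sigma^2 \leq r^2 s^2$.

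So the core inequality is: if $U \sim N(\mu,\tau^2)$ with $\mu \geq as^2$ and $\tau \leq rs$, then $\E\tanh(U) \geq a/\kappa_1$. Since $\tanh$ is odd and increasing with $\tanh(t) \geq 1 - 2e^{-2t}$ for $t \geq 0$, I would split $\E\tanh(U) = \E[\tanh(U); U > 0] + \E[\tanh(U); U \leq 0] \geq \E[(1 - 2e^{-2U}); U>0] - \P(U \leq 0)$, and bound $\P(U \leq 0) \leq \P(U < \mu - \tau\sqrt{2\log(\cdots)})$-type tails using the normal tail bound (\ref{eq:improved-chernoff}), together with a bound on $\E e^{-2U}$ via the Gaussian MGF (which is $e^{-2\mu + 2\tau^2}$). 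Feeding in $\mu \geq as^2$, $\tau \leq rs$, and the hypothesis $r \leq as/\sqrt{5\log(2/(1 - a/\kappa_1))}$ — which is exactly what makes $\tau^2/\mu = \|\theta\|^2/\langle\theta,\ts\rangle$ small enough that the error terms are dominated by $1 - a/\kappa_1$ — should close the gap. The main obstacle is the bookkeeping in this last step: choosing the truncation level in the tail split and verifying that the stated constant $5$ and the form $\log(2/(1-a/\kappa_1))$ are exactly what the error terms demand, rather than something slightly larger. Everything else (Stein's identity, discarding the positive term, the algebra of Gaussian moments) is routine.
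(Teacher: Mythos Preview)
Your overall plan coincides with the paper's: use Stein's identity to write
\[
\langle M(\theta),\ts\rangle \;=\; \|\ts\|^2\,\E\!\left[2\omega\!\left(\tfrac{\langle\theta,X\rangle}{\sigma^2}\right)-1\right] \;+\; 2\langle\theta,\ts\rangle\,\E\,\omega'\!\left(\tfrac{\langle\theta,X\rangle}{\sigma^2}\right),
\]
discard the second term (nonnegative since $\omega'\ge 0$ and $\langle\theta,\ts\rangle\ge a\|\ts\|^2>0$), and reduce to a lower bound on $\E\tanh(U)$ with $U=\tfrac{\langle\theta,X\rangle}{\sigma^2}\sim N(\mu,\tau^2)$, $\mu\ge as^2$, $\tau\le rs$. (Your first display carries an extra factor of $2$: the $2$ in $M(\theta)=2\E Y\omega_\theta(Y)$ is cancelled by the $\tfrac12$ from the mixture, so the right-hand side should be $\E[\langle X,\ts\rangle(2\omega_\theta(X)-1)]$, not twice that. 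This is cosmetic.)

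The substantive gap is in your proposed bound on $\E\tanh(U)$. Splitting at $U>0$ and invoking the Gaussian MGF gives $\E[e^{-2U}]=e^{-2\mu+2\tau^2}$, but on $\D_{a,r}$ one only knows $\mu=\langle\theta,\ts\rangle/\sigma^2$ and $\tau^2=\|\theta\|^2/\sigma^2$; whenever $\|\theta\|>\|\ts\|$ (which is most of $\D_{a,r}$ since $r\ge 1$) you have $\tau^2>\mu$ and $e^{-2\mu+2\tau^2}$ is exponentially \emph{large}. So the MGF term swamps everything and the inequality you write down does not close. This is not just a constants issue: the exponent has the wrong sign.

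The paper avoids this by never touching the MGF. Its Lemma~\ref{lemma:sigmoid-lower-bound} uses Markov's inequality to get, for any $q\in[0,\mu]$,
\[
\E\,\omega(\tau Z+\mu)\;\ge\;\omega(\mu-q)\Bigl(1-\tfrac12 e^{-q^2/(2\tau^2)}\Bigr),
\]
and Lemma~\ref{lemma:omega-lower-bound} then chooses $q$ by solving the quadratic $2(\mu-q)=q^2/(4\tau^2)$ (after a difference-of-squares factorization), which with $\mu\ge as^2$, $\tau\le rs$ yields $\E\,\omega>1-e^{-(as/r)^2/5}$. The hypothesis $r\le as/\sqrt{5\log(2/(1-a/\kappa_1))}$ is exactly what turns this into $\E\tanh(U)\ge 1-2e^{-(as/r)^2/5}\ge a/\kappa_1$. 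If you want to rescue your splitting argument, the fix is to truncate not at $0$ but at a level $c\in(0,\mu)$ and use the pointwise bound $e^{-2U}\le e^{-2c}$ on $\{U>c\}$ rather than the global MGF; optimizing $c$ then reproduces the paper's quadratic and its constant $5$.
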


\begin{lemma} \label{norm_bound}
Assume $\theta \in \D_{a,r}$, and let $\kappa_2$ be any number in $(0, 1)$. If $\frac{4}{\kappa_2} \leq r \leq \frac{as}{\sqrt{5\log(8/\kappa_2)}}$, then
\begin{align*}
\|M(\theta)\| < \kappa_2 r\|\theta^{\star}\|.
\end{align*}
\end{lemma}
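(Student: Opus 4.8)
The plan is to bound $\|M(\theta)\| = \|2\E Y \omega_\theta(Y)\|$ directly using the mixture structure of $Y$ and the fact that $\omega$ takes values in $(0,1)$. Write $Y = \epsilon \ts + \sigma Z'$ where $\epsilon$ is a Rademacher sign and $Z' \sim N(0, I_d)$, so that by symmetry of the mixture $f$ we can condition on the two components. The term $\frac{1}{n}\sum y_i$ has no population analogue here since $\E Y = 0$, so $M(\theta) = 2\E[Y\omega_\theta(Y)]$ exactly. Splitting $\omega_\theta(Y) = \tfrac12 + (\omega_\theta(Y) - \tfrac12)$ and using $\E Y = 0$, we get $M(\theta) = 2\E[Y(\omega_\theta(Y) - \tfrac12)]$, and since $|\omega(t) - \tfrac12| \le \tfrac12$ always, a crude bound gives $\|M(\theta)\| \le \E\|Y\|$, which is too weak. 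Instead I would exploit that $\omega_\theta(Y) - \tfrac12$ is small unless $\langle \theta, Y\rangle/\sigma^2$ is small, i.e. I want to use $|\omega(t) - \tfrac12| \le |t|$ combined with a sharper decomposition, or better, apply Stein's lemma (as the abstract advertises) to replace $\E[Y \omega_\theta(Y)]$ by an expression involving $\E[\omega_\theta'(Y)]$ where the derivative of the logistic is exponentially small at the relevant scale.

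Concretely, the cleaner route: condition on $\epsilon = +1$, so $Y = \ts + \sigma Z'$, and apply Stein's lemma coordinatewise to $\E[(\ts + \sigma Z')\,g(Z')]$ with $g(z') = \omega(\langle\theta, \ts + \sigma z'\rangle/\sigma^2)$. Stein gives $\E[Z' g(Z')] = \E[\nabla_{z'} g(Z')] = (\|\theta\|/\sigma)\,\E[\omega'(\cdot)]\,\hat\theta$ direction, so the component-conditional contribution is $\ts\,\E[\omega(\cdot)] + (\|\theta\|^2/\sigma)\E[\omega'(\cdot)]\,\theta/\|\theta\|$ — everything lives in $\mathrm{span}\{\ts,\theta\}$. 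Averaging over $\epsilon$ and using that $\omega(-t) = 1 - \omega(t)$, the $\ts\,\E[\omega]$ pieces combine to $\ts\,(\E[\omega(\cdot)] - \E[\omega(\cdot)]) $ appropriately, leaving $M(\theta)$ as a bounded combination of $\ts$ and $\theta$ with coefficients controlled by $\E[\omega'(\langle\theta,Y\rangle/\sigma^2)]$. Since $\omega'(t) = 2\omega(t)(1-\omega(t)) \le 2e^{-2|t|}$ and, on $\D_{a,r}$, $\langle\theta, Y\rangle/\sigma^2$ concentrates around $\langle\theta,\ts\rangle/\sigma^2 \ge a\|\ts\|^2/\sigma^2 = a s^2 \gg 0$, a Gaussian tail computation using \eqref{eq:improved-chernoff} bounds $\E[\omega'(\langle\theta,Y\rangle/\sigma^2)]$ by something like $e^{-c(as/r)^2}$ (the $1/r$ appearing because $\|\theta\| \le r\|\ts\|$ inflates the variance of $\langle\theta, \sigma Z'\rangle/\sigma^2$ to order $r^2 s^2$, so the relevant deviation is $as^2$ over standard deviation $\sim rs$).

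Putting the pieces together, I expect to obtain $\|M(\theta)\| \le \|\ts\| \cdot (\text{small}) + (\|\theta\|^2/\sigma)\cdot\E[\omega'] \le \|\ts\|\cdot(\text{small}) + r^2\|\ts\|(s)\cdot e^{-c(as/r)^2}$, and then I need the hypothesis $r \le as/\sqrt{5\log(8/\kappa_2)}$ to force $e^{-(1/5)(as/r)^2} \le \kappa_2/8$, while $4/\kappa_2 \le r$ handles the lower-order term (and guarantees $r$ is large enough that the polynomial-in-$r$ prefactor is absorbed). The constant bookkeeping — matching the exact exponent $5$ and the constant $8$, and confirming the polynomial prefactor in $r$ and $s$ is dominated once $r \ge 4/\kappa_2$ — is the fiddly part. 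The main genuine obstacle is controlling $\E[\omega'(\langle\theta, Y\rangle/\sigma^2)]$ uniformly over $\theta \in \D_{a,r}$: one must track how the variance of $\langle\theta, Y\rangle$ scales with $\|\theta\|$ (hence with $r$) and verify the half-space constraint $\langle\theta,\ts\rangle \ge a\|\ts\|^2$ is strong enough relative to that variance to keep the exponential small — this is exactly where the upper bound on $r$ enters and where the Stein-based approach pays off over the naive $|\omega - \tfrac12| \le \tfrac12$ estimate.
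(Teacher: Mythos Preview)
Your Stein decomposition is the same starting point as the paper (Section~\ref{sec:stein}): $M(\theta) = 2\ts\bigl(\E\omega_\theta(X) - \tfrac12\bigr) + 2\theta\,\E\omega_\theta'(X)$ with $X\sim N(\ts,\sigma^2 I_d)$. But there is a genuine error in how you handle the $\ts$ piece. You claim that ``the $\ts\,\E[\omega]$ pieces combine to $\ts\,(\E[\omega(\cdot)] - \E[\omega(\cdot)])$'' and thereafter treat the $\ts$-coefficient as ``small''. It is not: after averaging over the sign and using $\omega(-t)=1-\omega(t)$, the coefficient of $\ts$ is exactly $2\E\omega_\theta(X)-1$, which is close to $1$ on $\D_{a,r}$ (indeed Lemma~\ref{lemma:omega-lower-bound} gives $\E\omega_\theta(X) > 1 - e^{-(as/r)^2/5}$). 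So the $\ts$-contribution to $\|M(\theta)\|$ is essentially $\|\ts\|$, not something that decays with $s/r$. (Relatedly, your Stein calculation for the $\sigma Z'$ part should produce $\theta\,\E\omega'$, not $(\|\theta\|^2/\sigma)\E\omega'$; so the prefactor ``$r^2\|\ts\|\,s$'' in your final display is off by a factor of $rs$.)

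The fix is easy and your sketch almost contains it: bound the $\ts$-term crudely by $\|\ts\|$, bound the $\theta$-term by $2r\|\ts\|\,\E\omega_\theta'(X)$, and then use the lower bound $r \ge 4/\kappa_2$ to absorb the first term (since $\|\ts\| \le \tfrac{\kappa_2 r}{4}\|\ts\|$), while the upper bound on $r$ plus your $\omega'(t)\le 2e^{-2|t|}$ tail computation controls the second. That route works and is actually a bit more direct than the paper's argument, which instead writes $\omega' = 2\omega(1-\omega)$, uses Jensen to get a quadratic in $\E\omega_\theta(X)$, and then invokes the lower bound on $\E\omega_\theta(X)$ from Lemma~\ref{lemma:omega-lower-bound} to show the quadratic is below $\kappa_2 r$. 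Both approaches ultimately need the same two ingredients --- the half-space forces $\E\omega$ close to $1$ (equivalently $\E\omega'$ small), and $r\ge 4/\kappa_2$ makes the residual $\|\ts\|$-sized piece negligible relative to $\kappa_2 r\|\ts\|$ --- but you have misidentified where the $\|\ts\|$ piece comes from and why it is harmless.
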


Lemma~\ref{inner_prod_bound} tells us that $M$ stays in $\H_a$, while Lemma~\ref{norm_bound} tells us that $M$ stays in $\B_r$. If $(a, r)$ satisfies the conditions of both Lemmas, then $M$ is stable in $\D_{a, r}$. Note that we need $s$ to be large enough to ensure the existence of valid ranges for $r$.\\

Let $S_{a,r}$ be the least upper bound on the norm of the difference between the sample and population operators in the region $\D_{a,r}$.
\begin{align*}
S_{a,r} := \sup_{\theta \in \D_{a,r}} \| M_n(\theta) - M(\theta) \|
\end{align*}

\begin{lemma} \label{stability}
Suppose $ \kappa_1 $ and $ \kappa_2 $ are as in Lemmata \ref{inner_prod_bound} and \ref{norm_bound} and $ a $ and $ r $ simultaneously satisfy the conditions stated therein. If
\begin{equation*} 
S_{a,r} \leq \|\ts\| \min\{  a(1/\kappa_1-1), r(1-\kappa_2) \}
\end{equation*}
then $M_n$ is stable in $\D_{a, r}$.
\end{lemma}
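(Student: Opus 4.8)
The plan is to show that whenever $\theta \in \D_{a,r}$, the sample iterate $M_n(\theta)$ remains in $\D_{a,r}$; since $\D_{a,r} = \H_a \cap \B_r$, it suffices to verify membership in each of the two defining regions separately, using the triangle inequality together with the two margin conditions on $S_{a,r}$. Throughout I would write $M_n(\theta) = M(\theta) + \big(M_n(\theta) - M(\theta)\big)$ and bound the perturbation term uniformly by $S_{a,r}$.

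First I would check that $M_n(\theta) \in \H_a$. Taking the inner product with $\ts$ and using Cauchy--Schwarz,
\begin{align*}
\langle M_n(\theta), \ts \rangle \geq \langle M(\theta), \ts \rangle - \| M_n(\theta) - M(\theta) \| \, \| \ts \| \geq \langle M(\theta), \ts \rangle - S_{a,r} \| \ts \|.
\end{align*}
By Lemma~\ref{inner_prod_bound}, the hypotheses on $a$ and $r$ give $\langle M(\theta), \ts \rangle \geq (a/\kappa_1)\|\ts\|^2$, so
\begin{align*}
\langle M_n(\theta), \ts \rangle \geq (a/\kappa_1)\|\ts\|^2 - S_{a,r}\|\ts\| \geq a\|\ts\|^2,
\end{align*}
where the last step uses $S_{a,r} \leq \|\ts\|\, a(1/\kappa_1 - 1)$. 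Hence $M_n(\theta) \in \H_a$.

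Next I would check that $M_n(\theta) \in \B_r$. By the triangle inequality and Lemma~\ref{norm_bound},
\begin{align*}
\| M_n(\theta) \| \leq \| M(\theta) \| + \| M_n(\theta) - M(\theta) \| < \kappa_2 r \|\ts\| + S_{a,r} \leq \kappa_2 r \|\ts\| + r(1-\kappa_2)\|\ts\| = r\|\ts\|,
\end{align*}
using $S_{a,r} \leq \|\ts\|\, r(1-\kappa_2)$. Therefore $M_n(\theta) \in \B_r$. Combining the two inclusions, $M_n(\theta) \in \D_{a,r}$ for every $\theta \in \D_{a,r}$, which is exactly the assertion that $M_n$ is stable in $\D_{a,r}$.

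There is no real obstacle here: the lemma is essentially a bookkeeping step that converts the population stability of Lemmas~\ref{inner_prod_bound} and~\ref{norm_bound} into sample stability by absorbing the uniform deviation $S_{a,r}$ into the slack left by $\kappa_1 < 1$ and $\kappa_2 < 1$. The only point requiring a moment's care is that both margin conditions must hold simultaneously, which is why the hypothesis takes the form of a minimum; the substantive work — controlling $S_{a,r}$ itself — is deferred to later results and is not needed for this proof.
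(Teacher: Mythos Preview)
Your proposal is correct and follows essentially the same approach as the paper's proof: decompose $M_n(\theta)$ into $M(\theta)$ plus a perturbation bounded by $S_{a,r}$, then use Lemma~\ref{inner_prod_bound} with the slack $a(1/\kappa_1-1)\|\ts\|$ to verify membership in $\H_a$, and Lemma~\ref{norm_bound} with the slack $r(1-\kappa_2)\|\ts\|$ to verify membership in $\B_r$. The paper's argument is identical in substance, just phrased with $\inf$ and $\sup$ over $\D_{a,r}$ rather than pointwise.
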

\begin{proof}
First, note that
\begin{align*}
\inf_{\theta\in \D_{a,r}}\langle M_n(\theta), \theta^{\star} \rangle
& \geq \inf_{\theta\in \D_{a,r}}[\langle M(\theta), \theta^{\star} \rangle - \|M_n(\theta)-M(\theta)\|\|\theta^{\star}\|] \\
& \geq (a/\kappa_1)\|\theta^{\star}\|^2 - a(1/\kappa_1-1)\|\theta^{\star}\|^2 \\
& = a\|\theta^{\star}\|^2,
\end{align*}
where the lower bound on $ \langle M(\theta), \theta^{\star} \rangle $ was proved in Lemma \ref{inner_prod_bound}.
Finally, observe that
\begin{align*}
\sup_{\theta\in \D_{a,r}}\|M_n(\theta)\|
& \leq \sup_{\theta\in \D_{a,r}}[\|M(\theta)\| + \|M_n(\theta)-M(\theta)\|] \\
& \leq r\kappa_2\|\theta^{\star}\| + r(1-\kappa_2)\|\theta^{\star}\| \\
& = r\|\theta^{\star}\|,
\end{align*}
where the upper bound on $ \|M(\theta)\| $ was proved in Lemma \ref{norm_bound}.

\end{proof}

\begin{lemma} \label{sup_norm_bound}
If $n \geq c_3 d \log (1/\delta)$, then
\begin{align*}
S_{a,r} \leq c_4 r \| \ts \| \sqrt{\| \ts \|^2 + \sigma^2} \sqrt{\frac{d \log (1/\delta)}{n}}
\end{align*}
with probability at least $1-\delta$.
\end{lemma}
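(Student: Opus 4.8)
The plan is a uniform concentration (empirical-process) argument over $\theta\in\D_{a,r}$. First I would rewrite the quantity in a convenient form: using the identity $2\omega(t)-1=\tanh(t)$ together with $\E Y=0$, the empirical-mean contribution cancels and
\begin{align*}
M_n(\theta)=\tfrac1n\sum_i y_i\tanh\!\big(\tfrac{\langle y_i,\theta\rangle}{\sigma^2}\big),\qquad M(\theta)=\E\, Y\tanh\!\big(\tfrac{\langle Y,\theta\rangle}{\sigma^2}\big),
\end{align*}
so that $M_n(\theta)-M(\theta)=\tfrac1n\sum_i\big(g_\theta(y_i)-\E g_\theta(Y)\big)$ with $g_\theta(y):=y\tanh(\langle y,\theta\rangle/\sigma^2)$, an average of i.i.d.\ mean-zero vectors. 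Writing the norm through the unit sphere gives $S_{a,r}=\sup_{\theta\in\D_{a,r}}\sup_{\|v\|=1}\langle v,M_n(\theta)-M(\theta)\rangle$, so the task reduces to controlling the scalar empirical process $(\theta,v)\mapsto\tfrac1n\sum_i\langle v,g_\theta(y_i)\rangle-\E\langle v,g_\theta(Y)\rangle$ indexed by $\D_{a,r}\times S^{d-1}$.

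Next is a pointwise tail bound. For fixed $(\theta,v)$, the summand $\langle v,g_\theta(y)\rangle=\langle v,y\rangle\tanh(\langle y,\theta\rangle/\sigma^2)$ is a factor bounded by $1$ times the variable $\langle v,Y\rangle$, whose conditional laws are $N(\pm\langle v,\ts\rangle,\sigma^2)$ and which is therefore sub-Gaussian with parameter at most a constant times $\sqrt{\|\ts\|^2+\sigma^2}$; hence so is $\langle v,g_\theta(Y)\rangle$. A Hoeffding-type inequality then yields
\begin{align*}
\P\Big(\big|\tfrac1n\textstyle\sum_i\langle v,g_\theta(y_i)\rangle-\E\langle v,g_\theta(Y)\rangle\big|>t\Big)\le 2\exp\!\Big(-\tfrac{c\,nt^2}{\|\ts\|^2+\sigma^2}\Big).
\end{align*}
I would then discretize: cover $S^{d-1}$ by a $\tfrac12$-net of cardinality $\le 5^d$ and $\D_{a,r}\subseteq\B_r$, a ball of radius $r\|\ts\|$, by an $\varepsilon$-net of cardinality $\le(1+2r\|\ts\|/\varepsilon)^d$, and union-bound the displayed inequality over the product net. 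This gives, with probability at least $1-\delta$, control of order $\sqrt{(\|\ts\|^2+\sigma^2)\,d\log(1/\delta)/n}$ at every net point, with the hypothesis $n\ge c_3 d\log(1/\delta)$ ensuring that $d\log(1/\delta)$ absorbs the logarithm of the net cardinality for the choice of $\varepsilon$ made below.

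Passing from the nets to the whole index set: the supremum over $v$ is self-bounding, $\|M_n(\theta)-M(\theta)\|\le 2\max_{v'\in\mathrm{net}}\langle v',M_n(\theta)-M(\theta)\rangle$ for every $\theta$; and in $\theta$, since $\tanh$ is $1$-Lipschitz one has $\|g_\theta(y)-g_{\theta'}(y)\|\le\|y\|^2\|\theta-\theta'\|/\sigma^2$, so $M_n$ and $M$ are Lipschitz in $\theta$ with constants $\tfrac1n\sum_i\|y_i\|^2/\sigma^2$ and $\E\|Y\|^2/\sigma^2=(\|\ts\|^2+d\sigma^2)/\sigma^2$. A routine concentration bound for $\tfrac1n\sum_i\|y_i\|^2$ (valid once $n\gtrsim d\log(1/\delta)$, hence under the stated hypothesis) shows the former is $\lesssim(\|\ts\|^2+d\sigma^2)/\sigma^2$ on the relevant event, so taking $\varepsilon$ polynomially small in $n$ and $d$ renders the discretization error negligible while enlarging $\log(\text{net cardinality})$ only by $O(\log(nd/\delta))$, harmless up to adjusting constants. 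On the intersection of these events one reads off a bound of the advertised shape, and carrying the scale parameters $r$, $\|\ts\|$, $\sigma$ through the per-point tail bound and the covering number produces the stated $c_4\, r\|\ts\|\sqrt{\|\ts\|^2+\sigma^2}\sqrt{d\log(1/\delta)/n}$ (the prefactor $r\|\ts\|$ can always be retained as a conservative multiplier). The step I expect to be the main obstacle is exactly this uniformity in $\theta$: the region $\D_{a,r}$ has effective radius $r\|\ts\|$ allowed to be large, so the covering number, the Lipschitz bound on $g_\theta$, and the high-probability control of $\tfrac1n\sum_i\|y_i\|^2$ must be balanced so that the net's contribution does not swamp the $d\log(1/\delta)$ term; the remainder is careful bookkeeping.
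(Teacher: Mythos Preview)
Your proposal is correct and matches the paper's approach: the paper does not spell out a proof but simply invokes Corollary~2 of Balakrishnan et al., noting that it proceeds by ``a standard discretization and Hoeffding moment generating function argument,'' with the only change being that the supremum is taken over $\D_{a,r}$ rather than a Euclidean ball. Your sketch is precisely that argument written out in detail.
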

\begin{proof}
The proof is almost identical to Corollary 2 in \cite{Balakrishnan2014}. It uses a standard discretization and Hoeffding moment generating function argument to bound $ S_{a,r} $. The only difference here is that we control the supremum over $ \D_{a,r} $ instead of a Euclidean ball.
\end{proof}

Combining the conditions of Lemmas~\ref{stability} and~\ref{sup_norm_bound}, and specializing to the $a=1/2$ case, we define
\begin{align*}
N_\delta := d \log (1/\delta) \max \left\{ c_3, \frac{c_4^2 r^2 (\| \ts \|^2 + \sigma^2)}{[\min\{ (1/\kappa_1-1)/2, r(1-\kappa_2)]^2} \right\}
\end{align*}
One can verify that if $n \geq N_\delta$, then the bound in Lemma~\ref{sup_norm_bound} is no greater than the bound in Lemma~\ref{stability}. Thus if $n \geq N_\delta$, then $S_{1/2, r}$ satisfies both bounds with probability at least $1-\delta$.\\

\begin{theorem} \label{main}
If $\tn \in \D_{1/2,r}$, $ c_1 \leq r \leq c_2s/\sqrt{\log(es)} $, and $n \geq N_\delta$,  
then the EM iterates $\{ \th_t \}_{t=0}^\infty$ satisfy the bound
\begin{equation} \label{bound}
\| \th_t - \ts \| \leq \gamma^t \| \tn - \ts \| + \frac{1}{1-\gamma}c_4 r \| \ts \| \sqrt{\| \ts \|^2 + \sigma^2} \sqrt{\frac{d \log (1/\delta)}{n}}
\end{equation}
with probability at least $1-\delta$.
\end{theorem}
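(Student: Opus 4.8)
The plan is to combine the three ingredients already assembled---the population contraction of Theorem~\ref{contractive}, the deterministic stability criterion of Lemma~\ref{stability}, and the uniform deviation bound of Lemma~\ref{sup_norm_bound}---via the standard ``one-step decomposition plus geometric series'' argument, with the only real work being to verify that the hypotheses line up so that the iterates never leave $\D_{1/2,r}$.

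First I would fix the good event. By Lemma~\ref{sup_norm_bound}, since $n \geq N_\delta \geq c_3 d\log(1/\delta)$, with probability at least $1-\delta$ we have $S_{1/2,r} \leq c_4 r\|\ts\|\sqrt{\|\ts\|^2+\sigma^2}\sqrt{d\log(1/\delta)/n}$; work on this event from now on. The definition of $N_\delta$ is engineered precisely so that when $n \geq N_\delta$ this right-hand side is no larger than $\|\ts\|\min\{(1/\kappa_1-1)/2,\ r(1-\kappa_2)\}$, which is the hypothesis of Lemma~\ref{stability} (with $a=1/2$). Here $\kappa_1,\kappa_2$ are chosen as in Lemmas~\ref{inner_prod_bound} and~\ref{norm_bound}, and one must check that the assumed range $c_1 \leq r \leq c_2 s/\sqrt{\log(es)}$ can be taken to sit inside the intersection of the $r$-ranges required by those two lemmas (this is a matter of choosing the absolute constants $c_1,c_2$ appropriately, given fixed admissible $\kappa_1,\kappa_2$); this is the step I expect to be the most delicate bookkeeping, since the contraction theorem and the two stability lemmas each impose their own upper bound on $r$ of the form $\mathrm{const}\cdot s/\sqrt{\log(\cdot)}$ and their own lower bound, and all must be simultaneously satisfiable.

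Next I would run the induction. Since $\tn \in \D_{1/2,r}$ by hypothesis, and Lemma~\ref{stability} gives that $M_n$ maps $\D_{1/2,r}$ into itself on the good event, an immediate induction shows $\th_t = M_n^t(\tn) \in \D_{1/2,r}$ for all $t \geq 0$. Consequently Theorem~\ref{contractive} applies at every iterate: for each $t$,
\begin{align*}
\|\th_{t+1} - \ts\| = \|M_n(\th_t) - \ts\| \leq \|M_n(\th_t) - M(\th_t)\| + \|M(\th_t) - \ts\| \leq S_{1/2,r} + \gamma\|\th_t - \ts\|,
\end{align*}
where $\gamma = \gamma(s,r) < 1$ is the contraction factor from Theorem~\ref{contractive}.

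Finally I would unroll this linear recursion. Iterating the bound $\|\th_{t} - \ts\| \leq \gamma\|\th_{t-1}-\ts\| + S_{1/2,r}$ from $t$ down to $0$ yields
\begin{align*}
\|\th_t - \ts\| \leq \gamma^t\|\tn - \ts\| + \Big(\sum_{j=0}^{t-1}\gamma^j\Big) S_{1/2,r} \leq \gamma^t\|\tn - \ts\| + \frac{1}{1-\gamma}S_{1/2,r},
\end{align*}
and substituting the high-probability bound on $S_{1/2,r}$ from the first paragraph gives exactly~\eqref{bound}, valid with probability at least $1-\delta$. The whole argument is essentially deterministic once we are on the good event; the probabilistic content is entirely contained in Lemma~\ref{sup_norm_bound}, and the only genuine obstacle is confirming the mutual compatibility of the various constraints on $r$ and $s$ so that Theorem~\ref{contractive} and Lemmas~\ref{inner_prod_bound}--\ref{stability} can all be invoked at once.
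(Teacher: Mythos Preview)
Your proposal is correct and follows essentially the same argument as the paper: invoke Lemma~\ref{sup_norm_bound} to define the good event, use Lemma~\ref{stability} (whose hypothesis is guaranteed by $n\ge N_\delta$) to keep all iterates in $\D_{1/2,r}$, apply Theorem~\ref{contractive} at each step in a one-step triangle-inequality decomposition, and sum the resulting geometric series. The paper handles the compatibility of the various constraints on $r$ and $s$ in a remark immediately after the proof, exactly as you anticipated.
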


\begin{proof}
By Lemma \ref{stability}, the empirical EM iterates $\{ \th_t \}_{t=0}^\infty$ all belong to $ \D_{1/2,r} $ with probability at least $ 1-\delta $. Note that the prescribed constants $ c_1 $ and $ c_2 $ depend on $ \kappa_1 $ and $ \kappa_2 $. We will show that
\begin{equation*}
\| \th_t - \ts \| \leq \gamma^t \| \tn - \ts \| + \sum_{k=0}^{t-1}\gamma^k S_{1/2,r},
\end{equation*}
with probability at least $ 1- \delta $.
To this end, suppose the previous bound holds.
Then
\begin{align*}
\| \th_{t+1} - \ts \| & = \| M_n(\th_t) - \ts \| \\
& \leq \| M(\th_t) - \ts \| + \| M_n(\th_t) - M(\th_t) \| \\
& \leq \| M(\th_t) - \ts \| + S_{1/2,r}\\
 &\leq \gamma\|\th_t - \ts \| + S_{1/2,r} \qquad \text{by Lemma~\ref{contractive}}\\
 &\leq \gamma \left[\gamma^t \| \tn - \ts \| + \sum_{k=0}^{t-1}\gamma^k S_{1/2,r}\right]\\
 &= \gamma^{t+1}\| \tn - \ts \| + \sum_{k=0}^{t}\gamma^k S_{1/2,r}
\end{align*}
which confirms the inductive step. The $t=1$ case uses the same reasoning.\\

The theorem then follows from the fact that $ \sum_{k=0}^{t}\gamma^k \leq 1/(1-\gamma) $ and the bound on $ S_{1/2,r} $ from Lemma \ref{sup_norm_bound}.
\end{proof}

\begin{remark}
The fact that $ c_1 \leq r \leq c_2s/\sqrt{\log s} $ was determined from the conditions in Lemmata \ref{inner_prod_bound} and \ref{norm_bound} and Theorem \ref{contractive}. To reiterate we need
\begin{itemize}
\item $ s > 4r\sqrt{\log(76r^4)} $
\item $ \frac{4}{\kappa_2} \leq r \leq \frac{as}{\sqrt{5\log(8/\kappa_2)}} $
\item $ r \leq \frac{as}{\sqrt{5\log(2/(1-a/\kappa_1))}} $
\end{itemize}
to hold simultaneously. We also require that $ a $ belong to $ (0,1) $, $ \kappa_1 $ belong to $ (a,1) $, and $ \kappa_2 $ belong to $ (0,1) $. As a concrete example, with $ a = 1/2 $ and $ \kappa_1 = \kappa_2 = 3/4 $, all conditions are satisfied if $ 6 \leq r \leq s/(8\sqrt{\log(es)}) $.

\end{remark}

\section{Initialization strategy}\label{sec:initialization}

Theorem~\ref{main} describes the behavior of the EM iterates if the initialization is in a desirable region of the form $\D_{1/2,r}$. Realize, however, that by symmetry it is just as good to initialize in the corresponding region for $-\ts$. Thus, we define
\begin{align*}
\tilde{\H}_a := \H_a \cup - \H_a = \{ | \langle \tn, \ts \rangle | \geq \| \ts \|^2/2 \} \qquad \text{and} \qquad \tilde{\D}_{a,r} := \tilde{\H}_a \cap \B_r
\end{align*}
See Figure~\ref{fig}. Estimates $\hat{\theta}$ and $-\hat{\theta}$ correspond to the same mixture distribution in this model. We should interpret the results from Section~\ref{sec:error-bounds} in terms of distributions and thus not distinguish between estimating $\ts$ and estimating $-\ts$.
\begin{figure}
  \centering
  \includegraphics[width=9cm, height=9cm]{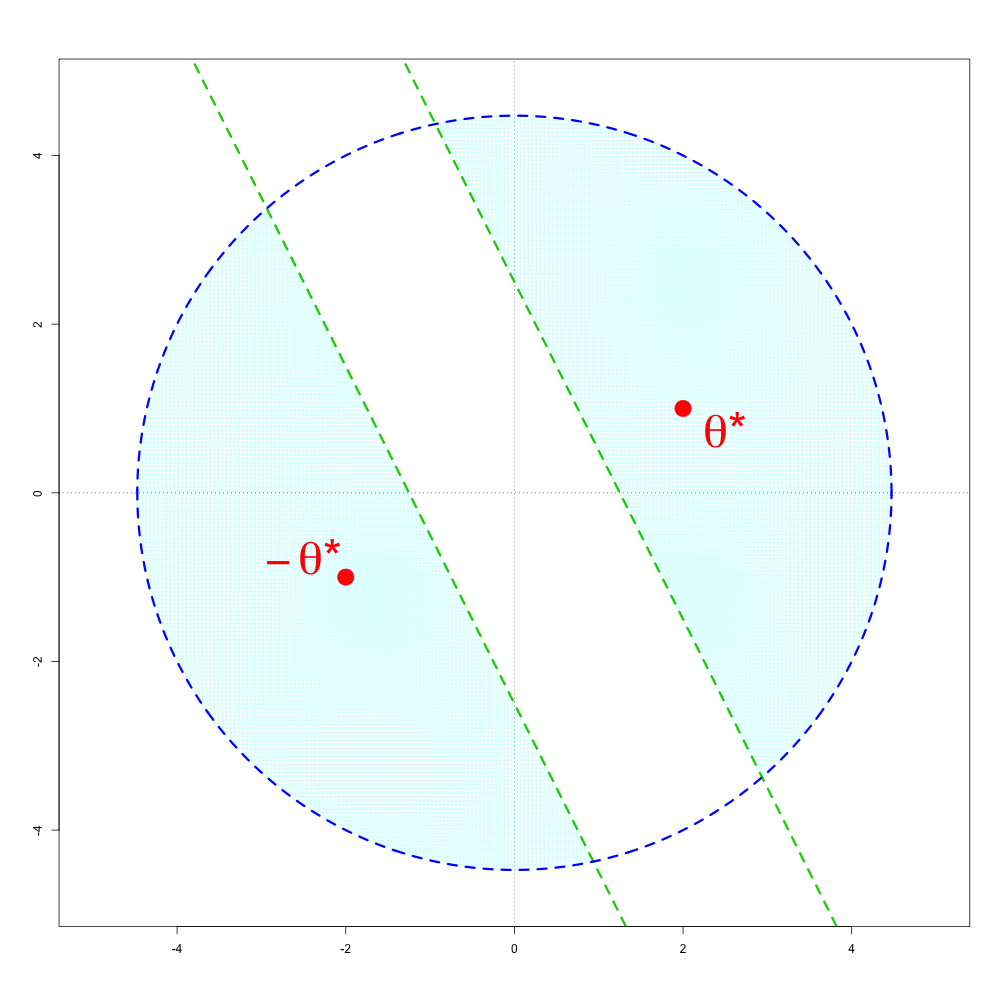}
  \caption{An example region $ \tilde{\D}_{a,r} $ in two dimensions.}
  \label{fig}
\end{figure}
Our error bounds in the previous section are conditional on the initializer being in the specified region, but we have yet to discuss how to generate such an initializer. As a first thought, note that initializing EM with the method of moments estimator has been shown to perform well in simulations \cite{pereira2012}. Furthermore, tensor methods have recently been devised for finding the method of moments estimator for Gaussian mixtures \cite{Anandkumar2014}. It would be interesting to analyze the behavior of that strategy with respect to $\D_{a, r}$. However, here we instead opt for a random initialization strategy for which we can derive a straight-forward lower bound on the probability of starting in $\tilde{\D}_{a,r}$.\\


For the remainder of this section, $\tilde{\D}_{a,r}$ will be considered a random event. For the first result, we will pretend that $\| \ts \|$ is known and can thus be used in the initialization.
\begin{proposition}\label{thm:initialization}
Let $\tn \sim N(0, \| \ts \|^2 I_d)$. Then
\begin{align}\label{initializer-probability}
\P (\tilde{\D}_{a,r}) \geq 2 \Phi(-a)-\P \left( \chi_d^2 > r^2 \right)
\end{align}
where $\Phi$ is the standard Normal cdf.
\end{proposition}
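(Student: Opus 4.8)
The plan is to lower-bound the probability of the intersection $\tilde{\D}_{a,r} = \tilde{\H}_a \cap \B_r$ by the elementary inclusion bound $\P(A \cap B) \ge \P(A) - \P(B^c)$, taken with $A = \tilde{\H}_a$ and $B = \B_r$, and then to evaluate $\P(A)$ and $\P(B^c)$ exactly using the rotational invariance of the isotropic Gaussian initializer $\tn \sim N(0, \|\ts\|^2 I_d)$. This reduces the whole statement to two one-line distributional computations.

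For the half-space term, $\langle \tn, \ts \rangle$ is a centered Gaussian with variance $\|\ts\|^2 \|\ts\|^2 = \|\ts\|^4$, so $\langle \tn, \ts\rangle/\|\ts\|^2$ has the same law as $Z$; hence $\P(\tilde{\H}_a) = \P(|\langle \tn, \ts\rangle| \ge a\|\ts\|^2) = \P(|Z| \ge a) = 2\Phi(-a)$. For the ball term, $\|\tn\|^2/\|\ts\|^2$ is a sum of $d$ independent squared standard normals, i.e.\ it is $\chi_d^2$-distributed, so $\P(\B_r^c) = \P(\|\tn\| > r\|\ts\|) = \P(\|\tn\|^2/\|\ts\|^2 > r^2) = \P(\chi_d^2 > r^2)$. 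Substituting both expressions into the inclusion bound gives precisely the claimed inequality.

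I do not expect any genuine obstacle here: the argument is a single union-type bound combined with two standard facts, and it is essentially the reason the initializer variance is chosen to be $\|\ts\|^2$, so that both normalizations come out clean. The only points worth flagging are that $\tilde{\H}_a$ is the \emph{two-sided} half-space event — reflecting that $\tn$ and $-\tn$ index the same mixture, as discussed before the proposition — which is what produces the factor $2$ in $2\Phi(-a)$ rather than $\Phi(-a)$; and that the bound is only informative once $r$ is large enough to make $\P(\chi_d^2 > r^2)$ small, i.e.\ once $r$ is on the order of $\sqrt{d}$, which (together with the ceiling $r \le c_2 s/\sqrt{\log(es)}$ from Theorem~\ref{contractive}) is what forces the signal-to-noise scaling $s \gtrsim \sqrt{d \log d}$ advertised in the abstract.
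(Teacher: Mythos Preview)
Your proposal is correct and follows essentially the same approach as the paper: the same inclusion bound $\P(\tilde{\H}_a \cap \B_r) \geq \P(\tilde{\H}_a) - \P(\B_r^c)$, followed by the same two distributional identifications $\langle \tn, \ts\rangle/\|\ts\|^2 \sim N(0,1)$ and $\|\tn\|^2/\|\ts\|^2 \sim \chi_d^2$. Your added remarks on the origin of the factor~$2$ and the induced $s \gtrsim \sqrt{d\log d}$ scaling are accurate and match the paper's subsequent discussion.
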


\begin{proof}
The probability of the intersection of $\tilde{\H}_a$ and $\B_r$ has a simple bound in terms of the complement of $\B_r$.
\begin{align*}
\P(\tilde{\D}_{a,r}) &= \P (\tilde{\H}_a \cap \B_r)\\
&= \P \tilde{\H}_a - \P (\tilde{\H}_a \cap \B_r^c)\\
&\geq \P \tilde{\H}_a - \P \B_r^c
\end{align*}

First, consider the event $\tilde{\H}_a$.
\begin{align*}
\P(\tilde{\H}_a) &= \P ( | \langle \tn, \ts \rangle | \geq a \| \ts \|^2)\\
 &= 2 \P \left( \left\langle \tfrac{\tn}{\| \ts \|}, \tfrac{\ts}{\| \ts \|} \right\rangle \geq a \right)\\
 &= 2 \P (Z \geq a)
\end{align*}
where $Z$ is standard Normal.\\

For the complement of $\B_r$,
\begin{align*}
\P(B^c_r) &= \P (\| \tn \| > r \| \ts \|)\\
 &= \P \left( \left\| \tfrac{\tn}{\| \ts \|} \right\| > r \right)\\
 &= \P \left( \left\| \tfrac{\tn}{\| \ts \|} \right\|^2 > r^2 \right)\\
 & = \P \left( \chi_d^2 > r^2 \right).
\end{align*}
\end{proof}

Proposition~\ref{thm:initialization} is for initializing with a known $\| \ts \|^2$. In practice, this quantity can be estimated from the data by
\begin{equation*}
\hat{T} := \frac{1}{n}\sum_i(\|Y_i\|^2-d\sigma^2).
\end{equation*}
In fact, $ \hat{T} $ can be shown to concentrate around $ \|\theta^{\star}\|^2 $ with high probability, as we will show. This gives an intuitive rationale to instead sample $ \tn $ from a $ N(0, (\hat{T}_++\epsilon)I_d) $ distribution (where $ \epsilon $ is a positive number).

\begin{proposition} Suppose $ \hat{\theta}_0 $ follows a $ N(0, (\hat{T}_++\sigma^2/2)I_d) $. Then
\begin{equation} \label{initializer-probability-estimator}
\P (\tilde{\D}_{a,r}) \geq [2 \Phi(-a) -\P (\chi^2_d > r^2/2 )]\P(E),
\end{equation}
where $ E = \{ |\hat{T}-\|\ts\|^2| < \sigma^2/2 \} $.
\end{proposition}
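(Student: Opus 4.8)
The plan is to condition on the event $E = \{|\hat T - \|\ts\|^2| < \sigma^2/2\}$ and show that, on this event, the conditional law of $\tn$ (given the data) is ``almost'' the known-$\|\ts\|$ initializer from Proposition~\ref{thm:initialization}, so that the earlier argument applies with the ball radius degraded by a factor $\sqrt 2$. First I would write $\P(\tilde\D_{a,r}) \geq \P(\tilde\D_{a,r} \mid E)\,\P(E)$, noting that $\tn$ depends on the data only through $\hat T$, and that on $E$ we have both $\hat T_+ + \sigma^2/2 \geq \|\ts\|^2$ (since $\hat T > \|\ts\|^2 - \sigma^2/2$ forces $\hat T_+ + \sigma^2/2 > \|\ts\|^2$) and $\hat T_+ + \sigma^2/2 < \|\ts\|^2 + \sigma^2 \leq 2\|\ts\|^2$ when $s \geq 1$. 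Hence, conditionally on a value $\tau := \hat T_+ + \sigma^2/2 \in [\|\ts\|^2, 2\|\ts\|^2]$, we may write $\tn = \sqrt{\tau}\, W$ with $W \sim N(0, I_d)$.

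The next step is to verify both containments for this rescaled Gaussian. For the half-space: $\langle \tn, \ts\rangle = \sqrt\tau\,\langle W, \ts\rangle$, and since $\sqrt\tau \geq \|\ts\|$, the event $|\langle \tn,\ts\rangle| \geq a\|\ts\|^2$ is implied by $|\langle W, \ts/\|\ts\|\rangle| \geq a\|\ts\|$... wait, one must be careful with the scaling; more cleanly, $|\langle \tn,\ts\rangle| \geq a\|\ts\|^2 \iff |\langle W, \ts\rangle| \geq a\|\ts\|^2/\sqrt\tau$, and since $\sqrt\tau \geq \|\ts\|$ the right side is $\leq a\|\ts\|$, so this is implied by $|\langle W,\ts/\|\ts\|\rangle| \geq a$, i.e. $|Z| \geq a$, which has probability $2\Phi(-a)$ exactly as before. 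For the ball: $\|\tn\|^2 = \tau\|W\|^2 \leq 2\|\ts\|^2\|W\|^2$, so $\|\tn\| \leq r\|\ts\|$ is implied by $\|W\|^2 \leq r^2/2$, i.e. $\chi_d^2 \leq r^2/2$. Combining as in the proof of Proposition~\ref{thm:initialization} (intersection bound: $\P(\tilde\H_a \cap \B_r) \geq \P(\tilde\H_a) - \P(\B_r^c)$ applied to the conditional law, using that the two implying events are in terms of the same $W$), gives the conditional bound $\P(\tilde\D_{a,r} \mid \hat T = t) \geq 2\Phi(-a) - \P(\chi_d^2 > r^2/2)$ for every $t$ with $|t - \|\ts\|^2| < \sigma^2/2$. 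Integrating over such $t$ and multiplying by $\P(E)$ yields \eqref{initializer-probability-estimator}.

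The main obstacle is the bookkeeping around the truncation $\hat T_+$ and the two-sided control of $\tau$: one needs $s \geq 1$ (or some mild condition already implicit in the regime $r \geq c_1$, $s \gtrsim \sqrt{\log s}$) to ensure $\sigma^2 \leq \|\ts\|^2$ and hence $\tau \leq 2\|\ts\|^2$; this should be stated as a (benign) standing assumption or absorbed into the constants. A secondary subtlety is that the half-space and ball events for $W$ must be handled jointly — we cannot independently lower-bound each conditional probability and add — but this is exactly the same intersection-complement trick already used in Proposition~\ref{thm:initialization}, now applied fiberwise in $\hat T$, so no new idea is required. Everything else (the concentration of $\hat T$ that would give a quantitative lower bound on $\P(E)$) is deferred, consistent with the statement, which leaves $\P(E)$ as a factor.
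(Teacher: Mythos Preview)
Your proposal is correct and follows essentially the same route as the paper: both arguments intersect with $E$, use that on $E$ one has $\|\ts\|^2 \leq \hat T_+ + \sigma^2/2 \leq 2\|\ts\|^2$ (the upper bound requiring $s\geq 1$, which the paper also invokes explicitly), and then apply the same intersection--complement trick from Proposition~\ref{thm:initialization} to the standardized vector $\tn/\sqrt{\hat T_+ + \sigma^2/2}$. The only cosmetic difference is that you condition fiberwise on $\hat T$ and integrate, whereas the paper phrases everything via containments of joint events $(\cdot)\cap E$; the content is identical.
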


\begin{proof}
First, note that
\begin{equation*}
\P (\tilde{\D}_{a,r}) \geq \P (\tilde{\D}_{a,r}\cap E) \geq \P (\tilde{\H}_a \cap E) - \P(\B_r^c \cap E).
\end{equation*}
On $ E $, $ \|\ts\|^2 \leq \hat{T}_++\sigma^2/2 $ and hence
\begin{equation} \label{event1}
\left\{ \left| \left\langle \tfrac{\tn}{\sqrt{\hat{T}_++\sigma^2/2}}, \tfrac{\ts}{\|\ts\|} \right\rangle \right| \geq a \right\} \cap E
\end{equation}
is contained in $ \tilde{\H}_a \cap E $.

Since $ s \geq 1 $, $ \sigma^2/2 \leq \|\ts\|^2/2 $ and hence on $ E $, $ \hat{T}_++\sigma^2/2 \leq 2\|\ts\|^2 $. Thus the event
\begin{equation} \label{event2}
\left\{ \left\| \tfrac{\tn}{\sqrt{\hat{T}_++\sigma^2/2}} \right\|^2 > r^2/2 \right\} \cap E
\end{equation}
contains $ \B_r^c \cap E $.
The final result follows by integrating the indicator variables of \eqref{event1} and  \eqref{event2} with respect to the the joint distribution of $ \tn $ and $ \hat{T} $ and then finally integrating with respect to the distribution of $ \hat{T} $.
\end{proof}

\begin{remark}
By the Chernoff tail bound for a $ \chi^2_d $ random variable, $ \P(\chi^2_d > r^2) \leq (r/\sqrt{d})^de^{-(r^2-d)/2} $. Thus, the condition $r > \sqrt{2d} $ is necessary for \eqref{initializer-probability-estimator} to be positive. By Theorem~\ref{main}, $ s > cr\sqrt{\log r} $ for the bound \eqref{bound} to hold. Thus if the signal to noise ratio is at least a constant multiple of $ \sqrt{d\log d} $, there is some $q > 0$ that lower bounds the probability that a given initializer $ \tn $ is in $ \tilde{\D}_{1/2,r} $ and hence for which $ \eqref{bound} $ holds. By drawing $m$ such initializers independently, the probability is at least $1-(1-q)^m$ that one or more are in $ \tilde{\D}_{1/2,r} $.
\end{remark}

$\P E_{\epsilon}$ can be bounded using Chebychev or Cantelli concentration inequalities, because $\hat{T}$ has variance $2\sigma^2(d+2\|\ts\|^2)/n$. However, Proposition~\ref{concentration} establishes a concentration inequality that decays exponentially with $n$.

\begin{proposition} \label{concentration}
If $ s \geq 1 $ and $ \epsilon < 5d\sigma\|\theta^{\star}\| $, then
\begin{equation*}
 \mathbb{P}(|\hat{T}-\|\theta^{\star}\|^2| > \epsilon) \leq 2\exp\{-n\epsilon^2/(36d\sigma^2\|\theta^{\star}\|^2)\}.
\end{equation*}
\end{proposition}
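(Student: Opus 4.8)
The plan is to apply a Chernoff bound to the i.i.d.\ sum $\hat T-\|\ts\|^2=\tfrac1n\sum_{i=1}^n W_i$, where $W_i:=\|Y_i\|^2-d\sigma^2-\|\ts\|^2$. Writing $Y_i=\varepsilon_i\ts+\sigma Z_i$ with $\varepsilon_i$ an independent symmetric sign and $Z_i\sim N(0,I_d)$, and using that $\varepsilon_i Z_i\stackrel{d}{=}Z_i$, we may take
\[
W_i \stackrel{d}{=} 2\sigma\langle\ts,Z_i\rangle+\sigma^2\bigl(\|Z_i\|^2-d\bigr).
\]
Splitting $Z_i$ into the coordinate along $\ts$ and the orthogonal part exhibits $W_i$ as a sum of two \emph{independent} pieces: a term $2\sigma\|\ts\|\,U+\sigma^2(U^2-1)$ with $U\sim N(0,1)$, and a centered chi-square term $\sigma^2(\chi^2_{d-1}-(d-1))$.

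Next I would bound $\E e^{\lambda W_i}$. The orthogonal piece is handled by the standard sub-exponential estimate $\E e^{\lambda(\chi^2_k-k)}\le e^{2k\lambda^2}$ valid on $0\le\lambda\le 1/4$. For the one-coordinate piece, completing the square in the Gaussian integral gives the closed form $(1-2\lambda\sigma^2)^{-1/2}e^{-\lambda\sigma^2}\exp\{2\lambda^2\sigma^2\|\ts\|^2/(1-2\lambda\sigma^2)\}$; bounding $1-2\lambda\sigma^2$ away from zero for $\lambda\sigma^2$ below a small constant and reusing the chi-square estimate, the product of the two factors is of the form $\exp\{\lambda^2 v\}$ with $v$ at most a constant times $d\sigma^4+\sigma^2\|\ts\|^2$, valid on a range $0\le\lambda\le c/\sigma^2$. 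The lower-tail generating function $\E e^{-\lambda W_i}$ admits an analogous (in fact constraint-free) bound.

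Then I would run the Chernoff bound $\P(\tfrac1n\sum_iW_i>\epsilon)\le\exp\{n(\lambda^2 v-\lambda\epsilon)\}$, optimize over $\lambda$ in the admissible range, and treat the lower tail the same way. The unconstrained optimizer $\lambda=\epsilon/(2v)$ lies in the admissible range exactly when $\epsilon$ is below a threshold of order $d\sigma^2+\|\ts\|^2$; invoking $s\ge 1$ (so $\sigma\le\|\ts\|$) this is implied by the hypothesis $\epsilon<5d\sigma\|\ts\|$, and in that regime the bound collapses to $\exp\{-n\epsilon^2/(4v)\}$. The proof then finishes by again using $s\ge1$ and $d\ge1$ to bound $v$ by a constant multiple of $d\sigma^2\|\ts\|^2$ (calibrated so that $4v\le 36\,d\sigma^2\|\ts\|^2$) and adding the two tails, giving $2\exp\{-n\epsilon^2/(36\,d\sigma^2\|\ts\|^2)\}$.

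The step I expect to be the main obstacle is the bookkeeping around the range of validity of the moment generating function bound: because the quadratic part of $W_i$ is only sub-exponential, the estimate for $\E e^{\lambda W_i}$ fails once $\lambda$ exceeds a constant times $1/\sigma^2$, and one must verify that the Chernoff optimizer stays below this cutoff for every $\epsilon$ in the stated window — this is precisely where the assumption $\epsilon<5d\sigma\|\ts\|$, together with $s\ge1$, is used. The remaining ingredients — the orthogonal decomposition that makes the two pieces independent, the Gaussian-integral computation, and the consolidation of constants via $s\ge 1$ — are routine.
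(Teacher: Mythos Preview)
Your overall Chernoff strategy and the MGF computations for the two independent pieces are correct; after invoking $s\ge1$ and $d\ge1$ you do land on a variance proxy bounded by $9d\sigma^2\|\ts\|^2$, so the sub-Gaussian part of the argument goes through. The gap is exactly where you anticipated it.

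Your decomposition differs from the paper's in a way that changes the admissible range of $\lambda$. The paper does \emph{not} absorb the Rademacher sign into $Z$; it keeps $W=2\sigma\eta\langle Z,\ts\rangle+\sigma^2(\|Z\|^2-d)$, first averages over $\eta$ via $\cosh x\le e^{x^2/2}$, and then crudely bounds $\langle Z,\ts\rangle^2\le\|Z\|^2\|\ts\|^2$. This converts the cross term into something quadratic in $\lambda$, so the subsequent $\chi^2_d$ MGF is evaluated at parameter $\sigma^2\lambda(1+2\lambda\|\ts\|^2)$ rather than $\sigma^2\lambda$. After imposing $s\ge1$ the natural range becomes $\lambda\lesssim 1/(\sigma\|\ts\|)$, and with $v=9d\sigma^2\|\ts\|^2$ the Chernoff optimizer stays in range precisely for $\epsilon$ of order $d\sigma\|\ts\|$, matching the form of the hypothesis.

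By contrast, your route gives range $\lambda\le c/\sigma^2$ together with $v\asymp d\sigma^4+\sigma^2\|\ts\|^2$, so the optimizer $\epsilon/(2v)$ is admissible only when $\epsilon\lesssim d\sigma^2+\|\ts\|^2$. The claim that this is implied by $\epsilon<5d\sigma\|\ts\|$ and $s\ge1$ is false: at $s=1$ one is comparing $5d\sigma^2$ with roughly $(d+2)\sigma^2$, and in general $d\sigma\|\ts\|/(d\sigma^2+\|\ts\|^2)=ds/(d+s^2)$ equals $\sqrt d/2$ at $s=\sqrt d$ and is therefore unbounded. To cover the full window $\epsilon<5d\sigma\|\ts\|$ with your decomposition you would need either a separate Bernstein-type tail for $\epsilon$ above your threshold, or to retain the sign $\eta$ and average over it first as the paper does.
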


\section{Appendix}\label{sec:appendix}

\subsection{Stein's lemma for mixtures}\label{sec:stein}

Let $W \sim \sum \lambda_j \phi_{\theta_j}$ be a mixture of spherical Gaussians and $X_j \sim \phi_{\theta_j}$ have the component distributions. A mixture version of Stein's lemma (Lemma 2 in \cite{Stein1981}) holds when $W$ is multiplied by a differentiable function $g$.
\begin{align*}
\E W g(W) &= \int \left[ w g(w) \sum \lambda_j \phi_{\theta_j} \right] dw\\
 &= \sum \lambda_j \E X_j g(X_j)\\
 &= \sum \lambda_j [\E \nabla g(X_j) + \theta_j \E g(X_j)]
\end{align*}

In our present case, $M(\theta)$ is a particularly simple version of this because $Y$ is a symmetric mixture, and $\omega$ is within a constant of an odd function: $\omega(-t) = 1 - \omega(t)$. Let $X$ and $X'$ have the component distributions $\phi_{\ts}$ and $\phi_{-\ts}$.
\begin{align*}
\tfrac{1}{2} M(\theta) &:= \E Y \omega(\tfrac{\langle \theta, Y \rangle}{\sigma^2})\\
 &= \tfrac{1}{2} \E X \omega(\tfrac{\langle \theta, X \rangle}{\sigma^2}) + \tfrac{1}{2} \E X' \omega(\tfrac{\langle \theta, X' \rangle}{\sigma^2})\\
 &= \tfrac{1}{2} \E X \omega(\tfrac{\langle \theta, X \rangle}{\sigma^2}) + \tfrac{1}{2} \E (-X) \omega(\tfrac{\langle \theta, (-X) \rangle}{\sigma^2})\\
 &= \E X \omega(\tfrac{\langle \theta, X \rangle}{\sigma^2}) - \tfrac{1}{2} \ts\\
 &= \E \nabla \omega(\tfrac{\langle \theta, X \rangle}{\sigma^2}) + \ts \E \omega(\tfrac{\langle \theta, X \rangle}{\sigma^2}) - \tfrac{1}{2} \ts\\
 &= \theta \E \omega'(\tfrac{\langle \theta, X \rangle}{\sigma^2}) + \ts [\E \omega(\tfrac{\langle \theta, X \rangle}{\sigma^2}) - \tfrac{1}{2}].
\end{align*}

\subsection{Expectation of a sigmoid}\label{sec:sigmoids}

First, we are interested in the behavior of quantities of the form $\E \psi(\alpha Z + \beta)$ as $\alpha$ and $\beta$ change. Observe that if $\psi$ is any increasing function, then clearly $\E \psi(\alpha Z + \beta)$ is increasing in $\beta$ regardless of the distribution of $Z$. We will next consider how the expectation changes in $\alpha$ in special cases.\\

Throughout the remainder of this section, assume $\psi$ is within a constant of an odd function and that it is twice differentiable, increasing, and concave on $\R^+$. Sigmoids, for instance, typically meet these criteria.
\begin{lemma}\label{lemma:sigmoid}
Let $Z \sim N(0, 1)$. The function $\alpha \mapsto \E \psi(\alpha Z + \beta)$ is non-increasing for $\alpha \geq 0$.
\end{lemma}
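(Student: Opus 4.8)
The plan is to prove the claim by differentiating under the integral sign with respect to $\alpha$ and showing the derivative is nonpositive for $\alpha \geq 0$. Writing $h(\alpha) := \E \psi(\alpha Z + \beta)$, I would first justify that $h$ is differentiable with $h'(\alpha) = \E[Z\,\psi'(\alpha Z + \beta)]$ (dominated convergence, using that $\psi'$ is bounded since $\psi$ is increasing and concave on $\R^+$ and within a constant of an odd function, hence has a bounded derivative). The goal then reduces to showing $\E[Z\,\psi'(\alpha Z + \beta)] \leq 0$.

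\smallskip

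The key device is Stein's lemma: for $Z \sim N(0,1)$ and a differentiable $g$, $\E[Z g(Z)] = \E[g'(Z)]$. Applying this with $g(z) = \psi'(\alpha z + \beta)$ gives $\E[Z\,\psi'(\alpha Z + \beta)] = \alpha\,\E[\psi''(\alpha Z + \beta)]$. Since $\alpha \geq 0$, it suffices to show $\E[\psi''(\alpha Z + \beta)] \leq 0$, i.e. that the expected value of $\psi''$ evaluated at a symmetric-around-$\beta$ Gaussian is nonpositive. This is where the structural assumptions on $\psi$ come in: $\psi$ is concave on $\R^+$, so $\psi'' \leq 0$ there; and $\psi$ is within a constant of an odd function, so $\psi''$ is odd, which forces $\psi'' \geq 0$ on $\R^-$ and $\psi''(0) = 0$. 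Thus $\psi''$ is an odd function that is nonpositive on the positive half-line.

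\smallskip

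The remaining step is to show $\E[\psi''(\alpha Z + \beta)] \leq 0$ for such a $\psi''$. The cleanest route: if $\beta \geq 0$, write $V = \alpha Z$, which is symmetric about $0$, and compare the mass $\psi''$ receives on $\{V + \beta > 0\}$ versus $\{V + \beta < 0\}$. Using the oddness $\psi''(-t) = -\psi''(t)$ and pairing the point $v$ with $-v$, one gets
\begin{align*}
\E[\psi''(\alpha Z + \beta)] = \int_{0}^{\infty} [\psi''(v+\beta) + \psi''(-v+\beta)]\,\varphi_\alpha(v)\,dv,
\end{align*}
where $\varphi_\alpha$ is the density of $\alpha Z$ on $(0,\infty)$ (doubled to account for symmetry; a constant factor is harmless). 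For fixed $v > 0$ and $\beta \geq 0$, I claim $\psi''(v+\beta) + \psi''(-v+\beta) \leq 0$: when $v \geq \beta$ this is $\psi''(v+\beta) - \psi''(v-\beta) \leq 0$ since $v+\beta \geq v - \beta \geq 0$ and $\psi''$ is nonincreasing-in-sign... more carefully, both $v+\beta$ and $|{-v+\beta}|$ are nonnegative with $v + \beta \geq |{-v+\beta}|$, and $\psi'' \leq 0$ with $|\psi''|$ needing to be monotone — this is the delicate point. The case $\beta < 0$ follows by the substitution $Z \mapsto -Z$ combined with oddness of $\psi''$, reducing to the $\beta > 0$ case.

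\smallskip

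The main obstacle I anticipate is the pointwise inequality $\psi''(v+\beta) + \psi''(-v+\beta) \leq 0$ for $v, \beta \geq 0$: the assumptions as literally stated (twice differentiable, increasing, concave on $\R^+$, within a constant of odd) give $\psi'' \leq 0$ on $\R^+$ and $\psi''$ odd, but do not immediately give monotonicity of $|\psi''|$, so I expect the actual argument either invokes an additional structural fact about the specific sigmoid $\omega$ in play, or reorganizes the integral differently — e.g. integrating by parts once more to express things in terms of $\psi'$ and its monotonicity, or writing $\E[\psi''(\alpha Z+\beta)]$ as a difference of densities of $\psi'$ at two symmetric points and using that $\psi'$ is increasing then decreasing (unimodal, peaking at $0$). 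I would pursue the unimodality-of-$\psi'$ route as the robust fallback: $\psi'$ even (since $\psi$ within-a-constant-of-odd), positive, and nonincreasing on $\R^+$ means $\E\psi'(\alpha Z + \beta)$ is maximized at $\beta = 0$ and... then relate $h'(\alpha)$ to a derivative in $\beta$ of that quantity. The bookkeeping of constants from "within a constant of an odd function" also needs care but is routine.
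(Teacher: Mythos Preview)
Your opening moves match the paper exactly: differentiate under the integral, apply Stein's lemma, and reduce to showing $\alpha\,\E[\psi''(\alpha Z+\beta)]\le 0$ with $\psi''$ odd and nonpositive on $\R^+$. The divergence comes at the symmetrization step, and the obstacle you flag is real under the stated hypotheses.

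You pair using the symmetry of $Z$, writing
\[
\E[\psi''(\alpha Z+\beta)]=\int_0^\infty\bigl[\psi''(v+\beta)+\psi''(-v+\beta)\bigr]\varphi_\alpha(v)\,dv,
\]
and then need the pointwise bound $\psi''(v+\beta)+\psi''(-v+\beta)\le 0$. As you note, this amounts to comparing $\psi''$ at two nonnegative arguments and would require $|\psi''|$ to be monotone on $\R^+$ (equivalently a sign condition on $\psi'''$), which the assumptions do not supply. Your proposed fallbacks via unimodality of $\psi'$ do not obviously close the gap either.

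The paper's trick is to pair using the oddness of $\psi''$ instead of the symmetry of $Z$: change variables to $u=\alpha z+\beta$ \emph{before} folding. This yields
\[
\alpha\,\E[\psi''(\alpha Z+\beta)]=\int_0^\infty \psi''(u)\Bigl[\phi\!\left(\tfrac{u-\beta}{\alpha}\right)-\phi\!\left(\tfrac{u+\beta}{\alpha}\right)\Bigr]\,du,
\]
where $\phi$ is the standard normal density. Now the ``hard'' function $\psi''$ appears only once with its known sign on $\R^+$, and the comparison falls on the ``easy'' function $\phi$: for $u\ge 0$ and $\beta\ge 0$ one has $|u-\beta|\le |u+\beta|$, so the bracket is nonnegative and the integral is nonpositive. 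No third-derivative information is needed. (Both arguments tacitly use $\beta\ge 0$; for $\beta<0$ the monotonicity reverses, which is consistent with $\E\psi(\alpha Z+\beta)\to \psi(0)$ as $\alpha\to\infty$.) The moral: when you have an odd integrand against a symmetric density, fold on whichever factor leaves you comparing values of the \emph{nicer} function.
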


\begin{proof}
We will interchange an integral and derivative (justified below), then appeal to Stein's lemma. Also, note that $\psi''$ is an odd function. Let $\phi$ denote the standard normal density.
\begin{align*}
\tfrac{d}{d a} \E \psi(a Z + \beta) |_{a = \alpha} &= \E Z \omega'(\alpha Z + \beta)\\
 &= \alpha \E \psi''(\alpha Z + \beta)\\
 &= \alpha \int \psi''(\alpha z + \beta) \phi(z) dz\\
 &= \int \psi''(u) \phi(\tfrac{u-\beta}{\alpha}) du\\
 &= \int_{u < 0} \psi''(u) \phi(\tfrac{u-\beta}{\alpha}) du + \int_{u \geq 0} \psi''(u) \phi(\tfrac{u-\beta}{\alpha}) du\\
 &= \int_{u > 0} \psi''(-u) \phi(\tfrac{-(u)-\beta}{\alpha}) du + \int_{u \geq 0} \psi''(u) \phi(\tfrac{u-\beta}{\alpha}) du\\
 &= - \int_{u > 0} \psi''(u) \phi(\tfrac{u+\beta}{\alpha}) du + \int_{u \geq 0} \psi''(u) \phi(\tfrac{u-\beta}{\alpha}) du\\
 &= \int_{u \geq 0} \psi''(u) [\phi(\tfrac{u-\beta}{\alpha}) - \phi(\tfrac{u+\beta}{\alpha})] du.
\end{align*}
Because $\psi$ is concave on $\R^+$, it's second derivative is negative. The other factor is non-negative on $\R^+$, so the overall integral is negative.\\

We still need to justify the interchange. First, use the fundamental theorem of calculus to expand $\psi(\alpha z + \beta)$ inside an integral over $\R^+$. Because $\phi'$ is non-negative, Tonelli's theorem justifies the change of order of integration. Then take a derivative of both sides.
\begin{align*}
\int_0^\infty \psi(\alpha z + \beta) \phi(z) dz &= \int_0^\infty \left[\psi((0) z + \beta) + \int_0^\alpha \tfrac{\partial}{\partial a} \psi(a z + \beta)\right] \phi(z) dz\\
 &= \int_0^\infty \psi(\beta) \phi(z) dz + \int_0^\alpha \int_0^\infty z \psi'(a z + \beta) \phi(z) dz
\end{align*}
\begin{align*}
\Rightarrow \qquad \frac{d}{d a} \left(\int_0^\infty \psi(a z + \beta) \phi(z) dz \right)_{a = \alpha} &= \int_0^\infty z \psi'(\alpha z + \beta) \phi(z) dz
\end{align*}
Tonelli's theorem justifies the interchange for the integral over $\R^-$ as well. Use the fact that the derivative of the sum is the sum of the derivatives to put everything back together.
\end{proof}

\begin{remark}
By symmetry, of course, $\alpha \mapsto \E \psi(\alpha Z + \beta)$ is non-decreasing for $\alpha \leq 0$, which tells us that $\E \psi(\alpha Z + \beta) \leq \psi(\beta)$.
\end{remark}

\begin{remark}
This result actually holds for any Normal random variable. Indeed, because any Normal $X$ can be expressed as $\alpha Z + \beta$, we see that $\E \psi(X)$ is increasing in the variance of $X$.
\end{remark}


\begin{remark}
The [stretched] logistic function $\omega$ satisfies the criteria for Lemma~\ref{lemma:sigmoid}.
\end{remark}

\begin{corollary}\label{cor:psi-lower-bound}
Let $Z \sim N(0, 1)$ and $\beta \geq 0$. Then
\begin{align*}
\E \psi(\alpha Z + \beta) \geq \psi(0).
\end{align*}
\end{corollary}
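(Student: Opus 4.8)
The plan is to separate the roles of $\beta$ and $\alpha$: use monotonicity in the location parameter to reduce to the case $\beta = 0$, and then evaluate the $\beta = 0$ expectation \emph{exactly} by combining the symmetry of $Z$ with the hypothesis that $\psi$ is within a constant of an odd function. First I would observe that since $\psi$ is increasing and $\beta \geq 0$, we have $\psi(\alpha z + \beta) \geq \psi(\alpha z)$ for every $z$, hence $\E \psi(\alpha Z + \beta) \geq \E \psi(\alpha Z)$. So it suffices to prove $\E \psi(\alpha Z) \geq \psi(0)$; in fact I will get equality here.

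For the $\beta = 0$ case, note that ``$\psi$ is within a constant of an odd function'' means $\psi = h + c$ with $h$ odd and $c$ constant; evaluating at $0$ gives $c = \psi(0)$, and therefore $\psi(t) + \psi(-t) = 2\psi(0)$ for all $t$. Since $Z \sim N(0,1)$ is symmetric about the origin, $\alpha Z$ and $-\alpha Z$ have the same distribution, so $\E \psi(\alpha Z) = \E \psi(-\alpha Z)$. Adding the two and using the pointwise identity,
\begin{align*}
2\,\E \psi(\alpha Z) = \E[\psi(\alpha Z) + \psi(-\alpha Z)] = \E[2\psi(0)] = 2\psi(0),
\end{align*}
so $\E \psi(\alpha Z) = \psi(0)$. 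Combining with the first step yields $\E \psi(\alpha Z + \beta) \geq \psi(0)$, as claimed. (Integrability is not a concern: for the sigmoids to which this is applied $\psi$ is bounded, and in general concavity and monotonicity on $\R^+$ together with the odd-plus-constant symmetry force $\psi$ to grow at most affinely, so $\psi(\alpha Z + \beta)$ is integrable since $\E|Z| < \infty$.)

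I do not expect a genuine obstacle here; the argument is immediate once the identity $\psi(t) + \psi(-t) = 2\psi(0)$ is in hand, and the only thing to be careful about is that it is this symmetry — not Lemma~\ref{lemma:sigmoid} — that does the work. It is worth recording that the corollary can alternatively be obtained from Lemma~\ref{lemma:sigmoid} (and its symmetric companion) by sending $\alpha \to \infty$: the map $\alpha \mapsto \E\psi(\alpha Z + \beta)$ is non-increasing on $\R^+$, and bounded convergence gives $\lim_{\alpha\to\infty}\E\psi(\alpha Z + \beta) = \tfrac12\psi(+\infty) + \tfrac12\psi(-\infty) = \psi(0)$ since $\alpha Z + \beta \to \pm\infty$ according to the sign of $Z$; but the direct argument above is shorter and is the one I would write up.
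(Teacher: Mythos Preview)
Your proof is correct. Both you and the paper begin identically, using that $\psi$ is increasing to reduce to $\beta=0$; the difference is in how the equality $\E\psi(\alpha Z)=\psi(0)$ is obtained. The paper appeals to the derivative computation inside Lemma~\ref{lemma:sigmoid}: when $\beta=0$ the integrand $\psi''(u)\big[\phi(\tfrac{u-\beta}{\alpha})-\phi(\tfrac{u+\beta}{\alpha})\big]$ vanishes identically, so $\alpha\mapsto\E\psi(\alpha Z)$ has zero derivative everywhere and can be evaluated at $\alpha=0$. You instead use the odd-plus-constant hypothesis together with the symmetry of $Z$ to get $\psi(t)+\psi(-t)=2\psi(0)$ and hence $\E\psi(\alpha Z)=\psi(0)$ in one line. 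Your route is more elementary---it needs neither twice-differentiability of $\psi$ nor the justification of differentiation under the integral---while the paper's route has the virtue of being a free byproduct of a calculation already performed. Your parenthetical alternative (send $\alpha\to\infty$ and use bounded convergence) is also valid for bounded $\psi$, and is closer in spirit to the paper's monotonicity-based viewpoint, but as you say the symmetry argument is the cleanest to write up.
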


\begin{proof}
We know that the minimizing [non-negative] value of $\beta$ is $0$. According to our derivation in Lemma~\ref{lemma:sigmoid}, when $\beta = 0$ the derivative of $\alpha \mapsto \E \psi(\alpha Z + \beta)$ is zero everywhere. That is, the expectation is the same at every $\alpha$; evaluating at $\alpha = 0$ gives the desired result.
\end{proof}

We will also need lower bounds on the expectation of $\omega$. First, we establish a more general fact for sigmoids.

\begin{lemma}\label{lemma:sigmoid-lower-bound}
If $\rho$ is a positive non-decreasing function and $Z \sim N(0, 1)$, then for any $q \geq 0$,
\begin{align*}
\E \rho(\alpha Z + \beta) \geq \rho(\beta - q) (1 - \tfrac{1}{2} e^{-q^2/2\alpha^2})
\end{align*}
\end{lemma}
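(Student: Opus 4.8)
The plan is to lower bound $\E \rho(\alpha Z + \beta)$ by restricting attention to the event where $\alpha Z + \beta$ is not too small, and using monotonicity of $\rho$ there. Specifically, since $\rho$ is positive, $\E \rho(\alpha Z + \beta) \geq \E [\rho(\alpha Z + \beta) \mathbf{1}\{\alpha Z + \beta \geq \beta - q\}]$. On that event, $\rho(\alpha Z + \beta) \geq \rho(\beta - q)$ by the non-decreasing property, so the right-hand side is at least $\rho(\beta - q)\, \P(\alpha Z + \beta \geq \beta - q)$.

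The next step is to control that probability from below. We have $\P(\alpha Z + \beta \geq \beta - q) = \P(\alpha Z \geq -q) = \P(Z \geq -q/\alpha) = 1 - \P(Z > q/\alpha)$, using symmetry of the standard normal (here I am assuming $\alpha > 0$; the $\alpha = 0$ case is trivial since then the left side is just $\rho(\beta) \geq \rho(\beta-q)$). Now I would apply the tail bound \eqref{eq:improved-chernoff}, namely $\P(Z > t) \leq \tfrac{1}{2} e^{-t^2/2}$ for $t \geq 0$, with $t = q/\alpha \geq 0$. This yields $\P(\alpha Z + \beta \geq \beta - q) \geq 1 - \tfrac{1}{2} e^{-q^2/(2\alpha^2)}$, and combining with the previous paragraph gives exactly the claimed inequality.

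There is essentially no serious obstacle here; the only thing to be slightly careful about is the degenerate case $\alpha = 0$ and the sign/direction of the tail bound application. If one wishes to avoid case analysis on the sign of $\alpha$, one can replace $\alpha$ by $|\alpha|$ throughout (the law of $\alpha Z$ depends only on $|\alpha|$ by symmetry of $Z$), so that $q/|\alpha| \geq 0$ and \eqref{eq:improved-chernoff} applies directly; the stated bound is unchanged since $\alpha^2 = |\alpha|^2$.
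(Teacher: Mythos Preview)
Your proof is correct and follows essentially the same approach as the paper: both arguments amount to the bound $\E \rho(\alpha Z + \beta) \geq \rho(\beta - q)\,\P(\alpha Z + \beta \geq \beta - q)$ followed by the Gaussian tail estimate \eqref{eq:improved-chernoff}. The only cosmetic difference is that the paper phrases the first step as Markov's inequality applied to the nonnegative variable $\rho(\alpha Z + \beta)$, whereas you write it out directly via the indicator decomposition; these are the same thing.
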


\begin{proof}
By Markov's inequality
\begin{align}\label{eq:rho-tail-bound}
\P (\alpha Z + \beta > t) &\leq \P (\rho(\alpha Z + \beta) \geq \rho(t))\nonumber\\
&\leq \frac{\E \rho(\alpha Z + \beta)}{\rho(t)}
\end{align}

Using the Gaussian tail bound (\ref{eq:improved-chernoff}),
\begin{align*}
\P (\alpha Z + \beta > t) &= \P (Z > \tfrac{t - \beta}{\alpha})\\
 &= 1- \P (Z \leq \tfrac{t - \beta}{\alpha})\\
 &\geq 1 - \tfrac{1}{2} e^{-(t-\beta)^2/2\alpha^2}
\end{align*}
as long as $t \leq \beta$. Putting this together with (\ref{eq:rho-tail-bound}), and setting $t := \beta - q$ gives the lemma.
\end{proof}

Recall that we defined $s$ to be the signal-to-noise ratio $\| \ts \|/\sigma$.

\begin{lemma}\label{lemma:omega-lower-bound}
If $\theta \in \D_{a, r}$ and $X \sim N(\ts, \sigma^2 I_d)$, then
\begin{align*}
\E \omega(\tfrac{\langle \theta, X \rangle}{\sigma^2}) > 1 - e^{-(as/r)^2/5}
\end{align*}
\end{lemma}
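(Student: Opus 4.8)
The plan is to reduce the statement to Lemma~\ref{lemma:sigmoid-lower-bound} applied to $\rho=\omega$. Since $X\sim N(\ts,\sigma^2 I_d)$, the scalar $\langle\theta,X\rangle/\sigma^2$ is Gaussian with mean $\beta:=\langle\theta,\ts\rangle/\sigma^2$ and variance $\alpha^2$ where $\alpha:=\|\theta\|/\sigma$, so we may write $\langle\theta,X\rangle/\sigma^2=\alpha Z+\beta$ with $Z\sim N(0,1)$. Because $\theta\in\D_{a,r}=\H_a\cap\B_r$, the hypotheses give $\beta\ge a\|\ts\|^2/\sigma^2=as^2$ and $\alpha\le r\|\ts\|/\sigma=rs$; in particular $\langle\theta,\ts\rangle\ge a\|\ts\|^2>0$ forces $\theta\ne 0$, hence $\alpha>0$.

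Since $\omega$ is positive and non-decreasing, Lemma~\ref{lemma:sigmoid-lower-bound} gives, for every $q\ge 0$,
\[
\E\,\omega(\tfrac{\langle\theta,X\rangle}{\sigma^2})=\E\,\omega(\alpha Z+\beta)\ge\omega(\beta-q)\bigl(1-\tfrac12 e^{-q^2/2\alpha^2}\bigr).
\]
I would take $q:=\sqrt{2/5}\,as^2$ and abbreviate $p:=e^{-(as/r)^2/5}\in(0,1)$. For the second factor, $q^2/2\alpha^2\ge q^2/(2r^2s^2)=(as/r)^2/5$, so $1-\tfrac12 e^{-q^2/2\alpha^2}\ge 1-p/2$. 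For the first factor, the crude bounds $a\in(0,1)$ and $r\ge 1$ give $a/(5r^2)<1/5<1-\sqrt{2/5}$, whence $\beta-q\ge as^2(1-\sqrt{2/5})\ge a^2s^2/(5r^2)=(as/r)^2/5\ge 0$ (this also confirms $q\le\beta$, as Lemma~\ref{lemma:sigmoid-lower-bound} requires); combining this with the elementary inequality $\omega(t)=1/(1+e^{-2t})\ge 1-e^{-2t}$ yields $\omega(\beta-q)\ge 1-e^{-2(\beta-q)}\ge 1-p^2$.

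Multiplying the two bounds gives $\E\,\omega(\tfrac{\langle\theta,X\rangle}{\sigma^2})\ge(1-p^2)(1-p/2)=(1-p)\bigl(1+\tfrac{p}{2}(1-p)\bigr)>1-p$, the final inequality because $p\in(0,1)$; this is the claimed bound. The one real obstacle is calibrating $q$ so that both factors come out simultaneously: $q$ must be large enough that $q^2/\alpha^2\gtrsim(as/r)^2$ yet small enough that $\beta-q$ stays comparable to $as^2$, and the worst-case substitutions $\alpha=rs$, $\beta=as^2$ together with $a<1\le r$ leave exactly enough slack. The algebraic identity $(1-p^2)(1-p/2)=(1-p)+\tfrac{p}{2}(1-p)^2$ then absorbs the constant losses without needing any lower bound on $s$.
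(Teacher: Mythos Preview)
Your argument is correct. Both you and the paper apply Lemma~\ref{lemma:sigmoid-lower-bound} to $\omega$ after writing $\langle\theta,X\rangle/\sigma^2$ as $\alpha Z+\beta$, so the overall strategy is the same. There are two small differences in execution. First, the paper invokes the monotonicity result (Lemma~\ref{lemma:sigmoid}) to replace $(\alpha,\beta)$ by the worst-case pair $(rs,\,as^2)$ \emph{before} applying Lemma~\ref{lemma:sigmoid-lower-bound}; you instead carry the general $(\alpha,\beta)$ into Lemma~\ref{lemma:sigmoid-lower-bound} and only use $\alpha\le rs$, $\beta\ge as^2$ when bounding each factor, so you never need Lemma~\ref{lemma:sigmoid} at all. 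Second, the paper chooses the free parameter by solving the quadratic $2t_0=(\beta-t_0)^2/(4\alpha^2)$, which makes the denominator $1+e^{-2t_0}$ cancel exactly with one factor of the numerator and yields $1-e^{-2t_0}$ directly; you fix $q=\sqrt{2/5}\,as^2$ up front and close the gap with the elementary identity $(1-p^2)(1-p/2)=(1-p)+\tfrac{p}{2}(1-p)^2>1-p$. The paper's choice is a bit slicker, but your route is more self-contained and the final constant in the exponent is identical.
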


\begin{proof}
First, realize that we can write $X$ as a transformation of a $d$-dimensional standard normal: $\sigma Z_d + \ts$. The inner product of $Z_d$ with any unit vector has a one-dimensional standard normal. We can also use the assumptions that $\| \theta \| \leq r \| \ts \|$ and $\langle \theta, \ts \rangle \geq a \| \ts \|^2$ along with the monotonicity properties of $\E \omega(\alpha Z + \beta)$ derived above.
\begin{align}\label{eq:omega-expression-ars}
\E \omega(\tfrac{\langle \theta, X \rangle}{\sigma^2}) &= \E \omega(\tfrac{\| \theta \|}{\sigma} Z + \tfrac{\langle \theta, \ts \rangle}{\sigma^2})\nonumber\\
 &\geq \E \omega(\tfrac{r \| \ts \|}{\sigma} Z + \tfrac{a \| \ts \|^2}{\sigma^2})\nonumber\\
 &= \E \omega(rs Z + as^2).
\end{align}

Let's specialize Lemma~\ref{lemma:sigmoid-lower-bound} to a particular claim for $\omega$.
\begin{align}\label{eq:omega-expectation-lower-bound}
\E \omega(\alpha Z + \beta)
& \geq \sup_{t \leq \beta}\left\{\frac{1 - e^{-(\beta-t)^2/2\alpha^2}}{1+e^{-2t}}\right\}\nonumber\\
& = \sup_{t \leq \beta}\left\{\frac{(1 - e^{-(\beta-t)^2/4\alpha^2})(1 + e^{-(\beta-t)^2/4\alpha^2})}{1+e^{-2t}}\right\}\nonumber\\
& \geq 1-e^{-2 t_0},
\end{align}
where $t_0 \leq \beta$ is a solution to the quadratic equation $2 t_0 = (\beta-t_0)^2/4\alpha^2$. Notice that when this equation is satisfied, the last step of the derivation follows by canceling the denominator with the right-hand factor of the numerator. A solution to this quadratic is
\begin{align*}
t_0 & = \beta + 4\alpha^2(1-\sqrt{\beta/(2\alpha^2) + 1}) \\
& = \frac{(\beta/\alpha)^2/2}{(1+\sqrt{\beta/(2\alpha^2)+1})^2}.
\end{align*}
The first expression shows that this $t_0$ is less than $\beta$. The second clarifies the relationships we'll need between $\alpha$ and $\beta$ and shows that $t_0$ is also non-negative.\\

Applying this bound to (\ref{eq:omega-expression-ars}), we have
\begin{align*}
t_0 &= \frac{(as/r)^2/2}{(1+\sqrt{a/(2r^2)+1})^2}\\
 &> (as/r)^2/10.
\end{align*}
The last step comes from upper bounding the denominator by $5$. (Recall that we require $a \in [0, 1]$ and $r \geq 1$.)
\end{proof}

\begin{lemma}\label{lemma:normal-difference-integral}
Let $\rho$ be any bounded and twice-differentiable Lipschitz function, and let $X_0 \sim N(\mu_0, \sigma_0)$ and $X_1 \sim N(\mu_1, \sigma_1)$. Then
\begin{align*}
\E \rho(X_1) - \E \rho(X_0) &= \int_0^1 \E [(\mu_1 - \mu_0) \rho'(X_\lambda) + \tfrac{1}{2} (\sigma_1^2 - \sigma_0^2) \rho''(X_\lambda)] d \lambda
\end{align*}
where $X_\lambda \sim (1-\lambda) N(\mu_0, \sigma_0) + \lambda N(\mu_1, \sigma_1)$.
\end{lemma}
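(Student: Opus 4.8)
The plan is to establish the identity for each fixed $\lambda$ by differentiation and to finish with the fundamental theorem of calculus. Let $X_\lambda \sim N(\mu_\lambda,\sigma_\lambda^2)$ have the interpolated parameters $\mu_\lambda := (1-\lambda)\mu_0 + \lambda\mu_1$ and $\sigma_\lambda^2 := (1-\lambda)\sigma_0^2 + \lambda\sigma_1^2$, and couple all of the $X_\lambda$ to a single standard normal $Z$ by setting $X_\lambda := \mu_\lambda + \sigma_\lambda Z$. Then $\lambda \mapsto \rho(X_\lambda)$ is differentiable with $\frac{d}{d\lambda}\rho(X_\lambda) = \rho'(X_\lambda)[(\mu_1-\mu_0) + \frac{\sigma_1^2-\sigma_0^2}{2\sigma_\lambda}Z]$, so the fundamental theorem of calculus gives $\rho(X_1)-\rho(X_0) = \int_0^1 \rho'(X_\lambda)[(\mu_1-\mu_0) + \frac{\sigma_1^2-\sigma_0^2}{2\sigma_\lambda}Z]\,d\lambda$ pointwise; taking expectations and interchanging $\E$ with $\int_0^1$ reduces the lemma to the single identity $\E[Z\,\rho'(X_\lambda)] = \sigma_\lambda\,\E\rho''(X_\lambda)$.

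That identity is exactly Stein's lemma for the standard normal applied to $z \mapsto \rho'(\mu_\lambda + \sigma_\lambda z)$: since $\E[Z f(Z)] = \E f'(Z)$, we get $\E[Z\,\rho'(\mu_\lambda + \sigma_\lambda Z)] = \sigma_\lambda\,\E[\rho''(\mu_\lambda + \sigma_\lambda Z)] = \sigma_\lambda\,\E\rho''(X_\lambda)$, the one-variable case of the Stein identity already used in Section~\ref{sec:stein}. Substituting back, the factor $\sigma_\lambda$ cancels and the integrand becomes $(\mu_1-\mu_0)\E\rho'(X_\lambda) + \tfrac12(\sigma_1^2-\sigma_0^2)\E\rho''(X_\lambda)$, which is the claimed formula. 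One could equally run the argument without the coupling: writing $p_\lambda$ for the $N(\mu_\lambda,\sigma_\lambda^2)$ density and using the heat-type identities $\partial_\mu p = -\partial_x p$ and $\partial_{v}p = \tfrac12\partial_{xx}p$ (where $v$ is the variance), one differentiates $\int \rho\,p_\lambda$ in $\lambda$ under the integral sign and integrates by parts once and then twice; the boundary terms vanish because $\rho$ and $\rho'$ are bounded while $p_\lambda$ and $\partial_x p_\lambda$ decay at $\pm\infty$.

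The substantive work --- and the main obstacle --- is the measure-theoretic bookkeeping. To interchange $\E$ with $\int_0^1$ one dominates the integrand: $|\rho'(X_\lambda)[(\mu_1-\mu_0) + \frac{\sigma_1^2-\sigma_0^2}{2\sigma_\lambda}Z]| \le \|\rho'\|_\infty(|\mu_1-\mu_0| + \frac{|\sigma_1^2-\sigma_0^2|}{2\min\{\sigma_0,\sigma_1\}}|Z|)$ uniformly in $\lambda\in[0,1]$, an integrable random variable because $\rho$ Lipschitz makes $\rho'$ bounded and $\sigma_0,\sigma_1 > 0$ keeps $\sigma_\lambda$ away from $0$; dominated convergence then also yields $\frac{d}{d\lambda}\E\rho(X_\lambda) = \E\frac{d}{d\lambda}\rho(X_\lambda)$. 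The Stein step requires $z\mapsto \rho'(\mu_\lambda+\sigma_\lambda z)$ to be absolutely continuous with $\E|\rho''(X_\lambda)|<\infty$; the former follows from twice-differentiability and the latter is the integrability already presupposed by the right-hand side of the statement. The one point that needs care is confirming that ``bounded, twice-differentiable, Lipschitz'' really suffices to license the integration-by-parts / Stein step, rather than needing a mild extra hypothesis such as boundedness of $\rho''$ (which does hold for the sigmoids $\rho$ to which the lemma is applied).
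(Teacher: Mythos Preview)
Your argument is correct. The paper does not actually prove this lemma: its entire proof is a one-line citation to Theorem~2 of M\"uller (2001), noting only that the result there is stated in $d$ dimensions under weaker regularity. So you have supplied a genuine, self-contained proof where the paper defers to a reference. Your coupling $X_\lambda=\mu_\lambda+\sigma_\lambda Z$ followed by differentiation in $\lambda$ and a single application of Stein's identity $\E[Zf(Z)]=\E f'(Z)$ is exactly the right mechanism, and the $\sigma_\lambda$ factors cancel just as you say; this is also how the lemma is \emph{used} later in the paper (Section~\ref{sec:contractive} writes the integrand as $\E\omega'(\sigma_\lambda Z+\mu_\lambda)$ with linearly interpolated $\mu_\lambda,\sigma_\lambda^2$), confirming that your reading of ``$X_\lambda\sim(1-\lambda)N(\mu_0,\sigma_0)+\lambda N(\mu_1,\sigma_1)$'' as the Gaussian with interpolated mean and variance---rather than a mixture---is the intended one.

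One small refinement: your honesty about the Stein step is well placed. The hypotheses ``bounded, twice-differentiable, Lipschitz'' guarantee $\rho'$ is bounded but say nothing a priori about $\rho''$, so $\E|\rho''(X_\lambda)|<\infty$ is not automatic. In every application the paper makes of this lemma, $\rho$ is $\omega$ or $\omega'$, both of which have bounded second derivative (Lemma~\ref{lemma:omega-derivatives-inequality}), so the issue never bites; and M\"uller's theorem, to which the paper appeals, covers this with milder conditions. If you want your write-up to stand entirely on its own, it would be cleanest simply to add ``with $\E|\rho''(X_i)|<\infty$'' to the hypotheses, or to note that this is implicit in the statement since the right-hand side must be well defined.
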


\begin{proof}
This is a variant of Theorem 2 in \cite{Muller2001}, which presents the result in $d$ dimensions and with much weaker regularity conditions.
\end{proof}

\begin{lemma}\label{lemma:omega-prime-expectation-bound}
Suppose $ |\mu| \leq 2\sigma^2  $. Then $ \mathbb{E}\omega^{\prime}(\sigma Z + \mu) \leq 2e^{-(1/2)(\mu/\sigma)^2} $.
\end{lemma}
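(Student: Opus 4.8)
The plan is to reduce to $\mu \geq 0$, replace $\omega'$ by a two-sided exponential, split the Gaussian integral at the point where the argument of $\omega'$ changes sign, and complete the square; the hypothesis $|\mu|\leq 2\sigma^2$ enters exactly once, to keep a Gaussian-tail argument nonnegative so that \eqref{eq:improved-chernoff} applies.

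First, differentiating $\omega(t)+\omega(-t)=1$ shows $\omega'(-t)=\omega'(t)$, so $\omega'$ is even; since $Z\overset{d}{=}-Z$, we may assume $\mu\geq 0$ (in general, replace $\mu$ by $|\mu|$). From $\omega(t)=(1+e^{-2t})^{-1}$ we get $\omega'(t)=2e^{-2t}/(1+e^{-2t})^2$, and since $(1+e^{-2t})^2\geq 1$ for $t\geq 0$ and $\omega'$ is even, this gives the elementary bound $\omega'(t)\leq 2e^{-2|t|}$ for all $t\in\R$. Now split
\begin{align*}
\E\,\omega'(\sigma Z+\mu)
&= \E\!\left[\omega'(\sigma Z+\mu)\,\mathbf{1}\{Z\geq -\mu/\sigma\}\right]
 + \E\!\left[\omega'(\sigma Z+\mu)\,\mathbf{1}\{Z< -\mu/\sigma\}\right].
\end{align*}
On the first event $\sigma Z+\mu\geq 0$, so $\omega'(\sigma Z+\mu)\leq 2e^{-2(\sigma Z+\mu)}$; completing the square in $\E[e^{-2\sigma Z}\mathbf{1}\{Z\geq -\mu/\sigma\}]$ produces a factor $e^{2\sigma^2}$ times $\P(Z> 2\sigma-\mu/\sigma)$. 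Because $\mu\leq 2\sigma^2$ we have $2\sigma-\mu/\sigma=(2\sigma^2-\mu)/\sigma\geq 0$, so \eqref{eq:improved-chernoff} bounds this tail by $\tfrac12 e^{-(2\sigma-\mu/\sigma)^2/2}$, and the exponent $-2\mu+2\sigma^2-\tfrac12(2\sigma-\mu/\sigma)^2$ collapses to $-\tfrac12(\mu/\sigma)^2$. The second event is symmetric: use $\omega'(\sigma Z+\mu)\leq 2e^{2(\sigma Z+\mu)}$, complete the square to get $e^{2\sigma^2}\,\P(Z>2\sigma+\mu/\sigma)$ with $2\sigma+\mu/\sigma\geq 0$ automatic, apply \eqref{eq:improved-chernoff}, and the exponent $2\mu+2\sigma^2-\tfrac12(2\sigma+\mu/\sigma)^2$ again collapses to $-\tfrac12(\mu/\sigma)^2$. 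Adding the two pieces gives $\E\,\omega'(\sigma Z+\mu)\leq 2e^{-(1/2)(\mu/\sigma)^2}$.

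There is no real obstacle here: the only things to get right are the choice of split point (where $\sigma Z+\mu$ vanishes) and the completion-of-the-square bookkeeping, with the hypothesis used solely to keep the tail argument $2\sigma-\mu/\sigma$ nonnegative. As a remark, the second piece can instead be bounded crudely by $\tfrac12\,\P(Z>\mu/\sigma)\leq \tfrac14 e^{-(1/2)(\mu/\sigma)^2}$ using $\omega'\leq \tfrac12$, which still beats $2e^{-(1/2)(\mu/\sigma)^2}$ (in fact with constant $\tfrac54$) and slightly shortens the argument.
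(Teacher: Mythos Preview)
Your proof is correct and follows essentially the same route as the paper: bound $\omega'(t)\le 2e^{-2|t|}$, split the expectation at $z=-\mu/\sigma$, complete the square to obtain the terms $2e^{2(\sigma^2\mp\mu)}\P(Z>2\sigma\mp\mu/\sigma)$, and apply the Gaussian tail bound \eqref{eq:improved-chernoff}. The only cosmetic difference is that you first reduce to $\mu\ge 0$ by evenness of $\omega'$, so the hypothesis is invoked only for the one tail argument $2\sigma-\mu/\sigma$, whereas the paper keeps $\mu$ signed and uses $|\mu|\le 2\sigma^2$ to make both $2\sigma\pm\mu/\sigma$ nonnegative; your side remark about bounding the second piece by $\tfrac12\,\P(Z>\mu/\sigma)$ is a nice shortcut but not in the paper.
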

\begin{proof}
Note that $ \omega^{\prime}(t) \leq 2e^{-2|t|} $. Thus
\begin{align*}
\omega^{\prime}(\sigma z + \mu)\phi(z) 
& \leq
2e^{-2|\sigma z + \mu|}\phi(z) \\
& = 2\mathbb{I}\{z > -\mu/\sigma\}e^{2(\sigma^2-\mu)}\phi(z+2\sigma) + \\ & \qquad\qquad 2\mathbb{I}\{z < -\mu/\sigma\}e^{2(\sigma^2+\mu)}\phi(z-2\sigma),
\end{align*}
where the last line follows from completing the square.
Next, integrate both sides of the inequality over $ \mathbb{R} $, making the change of variables $ u = z+2\sigma $ and $ u = z-2\sigma $ on each region of integration. This leads to the upper bound
\begin{equation*}
2e^{2(\sigma^2 - \mu)} \mathbb{P}\left(Z > 2\sigma - \mu/\sigma\right) + 2e^{2(\sigma^2 + \mu)} \mathbb{P}\left(Z > 2\sigma + \mu/\sigma\right).
\end{equation*}
Next, use the fact that $ \mathbb{P}(Z > t) \leq \frac{1}{2}e^{-t^2/2} $ for all $ t \geq 0 $. Since $ |\mu| \leq 2\sigma^2 $, we have that $ 2\sigma \pm \mu/\sigma \geq 0 $. Plugging in $ t = 2\sigma \pm \mu/\sigma $ and performing some algebra proves the result.
\end{proof}

\begin{lemma}\label{lemma:omega-derivatives-inequality}
$| \omega'' | \leq 2 \omega'$ and $| \omega''' | \leq 4 \omega'$.
\end{lemma}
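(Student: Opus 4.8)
The plan is to reduce both inequalities to two elementary facts about the logistic function: the differential identity $\omega' = 2\omega(1-\omega)$ together with the range bounds $0 < \omega < 1$ and $0 < \omega' \le \tfrac12$. From the definition \eqref{eq:logistic-definition} one computes directly
\begin{align*}
\omega'(t) = \frac{2e^{-2t}}{(1+e^{-2t})^2} = 2\,\omega(t)\bigl(1-\omega(t)\bigr),
\end{align*}
and since $\omega$ takes values in $(0,1)$ this immediately yields $0 < \omega' \le 2\cdot\tfrac14 = \tfrac12$ as well as $|1-2\omega| \le 1$ everywhere on $\R$.

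For the first claim I would differentiate the identity once more: $\omega'' = 2\omega'(1-\omega) - 2\omega\,\omega' = 2\omega'(1-2\omega)$. Because $\omega' > 0$, this gives $|\omega''| = 2\omega'\,|1-2\omega| \le 2\omega'$, which is exactly $|\omega''| \le 2\omega'$ (no absolute value is needed on the right-hand side).

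For the second claim I would differentiate once again, $\omega''' = 2\omega''(1-2\omega) - 4(\omega')^2$, and substitute the expression $\omega'' = 2\omega'(1-2\omega)$ just obtained to get
\begin{align*}
\omega''' = 4\omega'\bigl[(1-2\omega)^2 - \omega'\bigr].
\end{align*}
Since $(1-2\omega)^2 \in [0,1)$ and $\omega' \in (0,\tfrac12]$, the bracketed factor lies in $(-\tfrac12, 1)$, so its absolute value is at most $1$; hence $|\omega'''| = 4\omega'\,\bigl|(1-2\omega)^2 - \omega'\bigr| \le 4\omega'$.

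There is essentially no obstacle here beyond bookkeeping; the one point that genuinely matters is the bound $\omega' \le \tfrac12$, which is precisely what prevents the bracket $(1-2\omega)^2 - \omega'$ from dropping below $-1$ and thereby secures the constant $4$ (rather than a larger one) in the third-derivative estimate.
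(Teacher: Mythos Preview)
Your proof is correct and follows essentially the same route as the paper: both derive $\omega''=2\omega'(1-2\omega)$ and then a factored form of $\omega'''$ with leading factor $4\omega'$, and both conclude by noting that the remaining factor has absolute value at most $1$. Your expression $(1-2\omega)^2-\omega'$ is algebraically identical to the paper's $1-6\omega+6\omega^2$, so the only difference is that you make the bound on this factor a bit more explicit via $\omega'\le\tfrac12$.
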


\begin{proof}
Using the relationship $ \omega' = 2\omega(1-\omega) $, one can easily derive the identities
\begin{equation*}
\omega'' = 2\omega'(1-2\omega)
\end{equation*}
and
\begin{equation*}
\omega''' = 4\omega'(1-6\omega+6\omega^2).
\end{equation*}
The fact that $ 0 \leq \omega \leq 1 $ implies $ |1-2\omega| $ and $ |1-6\omega+6\omega^2| $ are both less than one.
\end{proof}

\subsection{Stability of population iterates in $\D_{a, r}$}

\begin{proof}[Proof of Lemma~\ref{inner_prod_bound}]
First, recall the expression for $M(\theta)$ derived in Section~\ref{sec:stein}.
\begin{align*}
\langle M(\theta), \ts \rangle &= 2 \| \ts \|^2 [\E \omega(\tfrac{\langle \theta, X \rangle}{\sigma^2}) - \tfrac{1}{2}] + 2 \langle \theta, \ts \rangle \E \omega'(\tfrac{\langle \theta, X \rangle}{\sigma^2})\\
 &\geq 2 \| \ts \|^2 [\E \omega(\tfrac{\langle \theta, X \rangle}{\sigma^2}) - \tfrac{1}{2}]\\
 &\geq 2 \| \ts \|^2 [(1 - e^{-(as/r)^2/5}) - \tfrac{1}{2}]\\
 &= \| \ts \|^2 (1 - 2 e^{-(as/r)^2/5}).
\end{align*}
We used non-negativity of $\omega'$ and our assumption about $\langle \theta, \ts \rangle$, then we invoked Lemma~\ref{lemma:omega-lower-bound}.\\

The assumed upper bound for $r$ implies that
\begin{align*}
1 - 2 e^{-(as/r)^2/5} \geq a/\kappa_1.
\end{align*}

\end{proof}

\begin{proof}[Proof of Lemma~\ref{norm_bound}]
Again, recall the expression for $M(\theta)$ derived in Section~\ref{sec:stein}. We will use the facts that $\omega' \geq 0$ and $\mathbb{E}\omega(\tfrac{\langle \theta, X \rangle}{\sigma^2}) \geq \omega(0) = 1/2$ (see Corollary~\ref{cor:psi-lower-bound}) when we use the triangle inequality. We will also use the identity $\omega^{\prime} = 2\omega(1-\omega)$.
\begin{align}\label{eq:m-norm-bound}
\|M(\theta)\|
& =  \| 2\theta^{\star} \left( \E \omega(\tfrac{\langle \theta, X \rangle}{\sigma^2}) - 1/2 \right) + 2\theta \E \omega^{\prime}(\tfrac{\langle \theta, X \rangle}{\sigma^2}) \| \nonumber\\
& \leq \|\theta^{\star}\| \left( 2\E \omega(\tfrac{\langle \theta, X \rangle}{\sigma^2}) - 1 \right) + 2\|\theta\| \E \omega^{\prime}(\tfrac{\langle \theta, X \rangle}{\sigma^2}) \nonumber\\
& \leq \|\theta^{\star}\| \left( 2\E \omega(\tfrac{\langle \theta, X \rangle}{\sigma^2}) - 1 \right) + 2r\|\theta^{\star}\| \E \omega^{\prime}(\tfrac{\langle \theta, X \rangle}{\sigma^2}) \nonumber\\
& \leq  \|\theta^{\star} \| [ 2(1+2r)\E \omega(\tfrac{\langle \theta, X \rangle}{\sigma^2}) - 4r\E \omega^2(\tfrac{\langle \theta, X \rangle}{\sigma^2}) - 1 ]\nonumber\\
&\leq \|\theta^{\star} \| [- 4r[\E \omega(\tfrac{\langle \theta, X \rangle}{\sigma^2})]^2 + 2(1+2r)\E \omega(\tfrac{\langle \theta, X \rangle}{\sigma^2}) - 1 ]
\end{align}
where the last step follows from Jensen's inequality.\\

We need to show that the quadratic factor of (\ref{eq:m-norm-bound}) is bounded by $\kappa_2 r$. According to the quadratic theorem, this is true when
\begin{align*}
\E \omega(\tfrac{\langle \theta, X \rangle}{\sigma^2}) \geq \frac{1 + 1/(2r) + \sqrt{1/(4r^2) + 1-\kappa_2}}{2}
\end{align*}
(The other solutions are less than $1/2$ and thus impossible.) Because square root is subadditive, it is sufficient to show that
\begin{align}\label{eq:norm-bound-omega-sufficient}
\E \omega(\tfrac{\langle \theta, X \rangle}{\sigma^2}) \geq \frac{1 + 1/r + \sqrt{1 - \kappa_2}}{2}.
\end{align}

Consider upper bounds for $r$ of the form
\begin{align*}
r \leq \frac{as}{\sqrt{5 \log(2/(g(\kappa_2) - \sqrt{1-\kappa_2}))}}
\end{align*}
where $g$ is any function greater than $\sqrt{1-\kappa_2}$ for $\kappa_2 \in (0, 1]$. Invoking Lemma~\ref{lemma:omega-lower-bound} and substituting this form of upper bound for $r$,
\begin{align*}
\E \omega(\tfrac{\langle \theta, X \rangle}{\sigma^2}) &> 1 - e^{-(as/r)^2/5}\\
 &\geq 1 - \frac{g(\kappa_2) - \sqrt{1-\kappa_2}}{2}
\end{align*}
Comparing this to (\ref{eq:norm-bound-omega-sufficient}), we find that $r$ needs to be at least $\tfrac{1}{1-g(\kappa_2)}$.

If $g(\kappa_2)$ is too close to $\sqrt{1 - \kappa_2}$ near $\kappa_2 = 1$, then the upper bound is too small; but the looser it is, the larger the lower bound is. The result in this lemma takes $g(\kappa_2) := 1 - \kappa_2/4$. For the upper bound, note that
\begin{align*}
g(\kappa_2) - \sqrt{1 - \kappa_2} &= 1 - \kappa_2/4 - \sqrt{1 - \kappa_2}\\
 &\geq 1 - \kappa_2/4 - (1 - \kappa_2/2)\\
 &= \kappa_2/4.
\end{align*}

\end{proof}

\subsection{Contractivity and Discussion}\label{sec:contractive}

\begin{proof}[Proof of Theorem~\ref{contractive}] First, observe that $\ts = M(\ts)$, as pointed out in Section~\ref{sec:em-iterates}. As in Section~\ref{sec:stein}, we can use $ \omega(t) = 1 - \omega(-t) $ and let $X \sim N(\ts, \sigma^2 I_d)$ to obtain a more manageable expression.
\begin{align*}
\tfrac{1}{2}[M(\theta) - M(\ts)] &= \E Y \left[\omega(\tfrac{\langle \theta, Y \rangle}{\sigma^2}) - \omega(\tfrac{\langle \ts, Y \rangle}{\sigma^2})\right]\\
&= \E X \left[\omega(\tfrac{\langle \theta, X \rangle}{\sigma^2}) - \omega(\tfrac{\langle \ts, X \rangle}{\sigma^2})\right]\\
 &= \E [X \Delta \omega_{\theta}(X)]
\end{align*}
where $\Delta \omega_{\theta}(X)$ denotes the difference $\omega(\tfrac{\langle \theta, X \rangle}{\sigma^2}) - \omega(\tfrac{\langle \ts, X \rangle}{\sigma^2})$.

By Stein's lemma,
\begin{align}\label{eq:expectation-x-delta}
\E [X \Delta \omega_{\theta}(X)] &= \ts \E \left[\omega(\tfrac{\langle \theta, X \rangle}{\sigma^2}) - \omega(\tfrac{\langle \ts, X \rangle}{\sigma^2})\right]\nonumber\\
 & \qquad + \E \left[\theta \omega'(\tfrac{\langle \theta, X \rangle}{\sigma^2}) - \ts \omega'(\tfrac{\langle \ts, X \rangle}{\sigma^2})\right]\nonumber\\
 &= \ts \E\Delta\omega_{\theta}(X) + \theta \E\Delta\omega_{\theta}'(X) + (\theta - \ts)\E\omega'(\tfrac{\langle \ts, X \rangle}{\sigma^2}).
\end{align}

Using Lemma~\ref{lemma:normal-difference-integral}, we can express the expectation in the first term of (\ref{eq:expectation-x-delta}) as
\begin{align*}
\E\Delta\omega_{\theta}(X)
& =  \int_0^1 \E \left[ (\mu_1 - \mu_0) \omega'\left(\sigma_{\lambda}Z+\mu_{\lambda} \right) + \dfrac{\sigma_1^2 - \sigma_0^2}{2} \omega''\left(\sigma_{\lambda}Z+\mu_{\lambda}\right) \right] d\lambda
\end{align*}
where $ \mu_{\lambda} := (1-\lambda) \tfrac{\|\ts \|^2}{\sigma^2} + \lambda \tfrac{\langle \theta, \ts \rangle}{\sigma^2}$, and $\sigma^2_{\lambda} := (1-\lambda) \tfrac{\|\ts \|^2}{\sigma^2} + \lambda \tfrac{\|\theta\|^2}{\sigma^2}$. We can bound the sizes of the coefficients of $\omega'$ and $\omega''$ as follows.
\begin{align*}
|\mu_1 - \mu_0| &= \left|\dfrac{\langle \theta^{\star}, \theta \rangle - \|\theta^{\star} \|^2}{\sigma^2}\right|\\
 &\leq \dfrac{\|\theta^{\star}\|\|\theta - \theta^{\star}\|}{\sigma^2}
\end{align*}
and
\begin{align*}
|\sigma_1^2 - \sigma_0^2| &= \left|\dfrac{\|\theta\|^2 - \|\theta^{\star} \|^2}{\sigma^2}\right|\\
 &\leq \dfrac{(\|\theta\| + \|\theta^{\star}\|)\|\theta - \theta^{\star}\|}{\sigma^2}
\end{align*}
Because $|\omega''| \leq 2 \omega'$ (see Lemma~\ref{lemma:omega-derivatives-inequality}) and $\omega' \geq 0$, we get
\begin{align*}
|\E\Delta\omega_{\theta}(X)| &\leq \left[ |\mu_1 - \mu_0| + |\sigma_1^2 - \sigma_0^2| \right] \int_0^1 \E \omega'\left( \sigma_{\lambda} Z + \mu_{\lambda} \right) d\lambda\\
 &\leq \dfrac{\|\theta - \ts\|(\| \theta \| + 2\|\ts\|)}{\sigma^2}  \int_0^1 \E \omega'\left( \sigma_{\lambda} Z + \mu_{\lambda} \right) d\lambda
\end{align*}

Lemma~\ref{lemma:normal-difference-integral} applied to the second term of (\ref{eq:expectation-x-delta}) works the same way, except with $\omega''$ and $\omega'''$ in place of $\omega'$ and $\omega''$. Use $|\omega''| \leq 2 \omega'$ again, along with $|\omega'''| \leq 4 \omega'$ (also from Lemma~\ref{lemma:omega-derivatives-inequality}) to find that
\begin{align*}
|\E\Delta\omega_{\theta}'(X)| &\leq \dfrac{2 \|\theta - \ts\|(\| \theta \| + 2\|\ts\|)}{\sigma^2} \int_0^1 \E \omega'\left( \sigma_{\lambda} Z + \mu_{\lambda} \right) d\lambda
\end{align*}

Lemma~\ref{lemma:omega-prime-expectation-bound} can be applied to this integral if we can verify the condition $|\mu_\lambda| \leq 2 \sigma_\lambda^2$ for all $0 \leq \lambda \leq 1$. Indeed, we've assumed $\langle \theta, \ts \rangle \geq \| \ts \|^2/2$ which implies (using Cauchy-Schwarz) $\| \ts \| \leq 2 \| \theta \|$, so
\begin{align*}
0 \leq \mu_\lambda &:= (1-\lambda) \tfrac{\|\ts \|^2}{\sigma^2} + \lambda \tfrac{\langle \theta, \ts \rangle}{\sigma^2}\\
 &\leq (1-\lambda) \tfrac{\|\ts \|^2}{\sigma^2} + \lambda \tfrac{2 \| \theta \|^2}{\sigma^2}\\
 &\leq 2 \sigma_\lambda^2
\end{align*}
By Lemma~\ref{lemma:omega-prime-expectation-bound},
\begin{align*}
\int_0^1 \E \omega'\left( \sigma_{\lambda} Z + \mu_{\lambda} \right) d\lambda &\leq \int_0^1 2e^{-(\mu_\lambda/\sigma_\lambda)^2/2} d\lambda\\
 &= \E_{\lambda \sim U[0, 1]} 2e^{-(\mu_\lambda/\sigma_\lambda)^2/2}\\
 &\leq \sup_{\lambda \in [0, 1]} 2e^{-(\mu_\lambda/\sigma_\lambda)^2/2}\\
 &\leq 2e^{-(s/r)^2/8}
\end{align*}
The last step comes from substituting the following lower bound for $\mu_\lambda/\sigma_\lambda$, derived using $\langle \theta, \ts \rangle \geq \| \ts \|^2 /2$ and $\| \theta \| \leq r \| \ts \|$.
\begin{align*}
\frac{\mu_\lambda}{\sigma_\lambda} &= \frac{(1-\lambda)s^2 + \lambda \langle \theta, \ts \rangle/\sigma^2}{\sqrt{(1-\lambda)s^2 + \lambda \| \theta \|^2/\sigma^2}}\\
 &\geq \frac{(1-\lambda)s^2 + \lambda s^2/2}{\sqrt{(1-\lambda)s^2 + r s^2}}\\
 &\geq \frac{s^2/2}{\sqrt{r^2 s^2}}\\
 &= s/(2r).
\end{align*}

We can also invoke Lemma~\ref{lemma:omega-prime-expectation-bound} to bound the expectation in the third term of (\ref{eq:expectation-x-delta}).
\begin{align*}
\E \omega'(\tfrac{\langle \ts, X \rangle}{\sigma^2}) &= \E \omega'(sZ + s^2)\\
 &\leq 2e^{-s^2/2}.
\end{align*}

Finally, returning to (\ref{eq:expectation-x-delta}), we can use the triangle inequality to bound the norm
\begin{align*}
\| \E [X \Delta \omega_{\theta}(X)] \| &\leq \| \theta - \ts \| \left( [\| \ts \| + 2 \| \theta \|] \frac{\| \theta \| + 2 \| \ts \|}{\sigma^2} 2e^{-(s/r)^2/8} + 2e^{-s^2/2} \right)\\
 &\leq \| \theta - \ts \| \left( 2[1+2r][r+2] s^2 e^{-(s/r)^2/8} + 2e^{-s^2/2} \right)\\
 &\leq \| \theta - \ts \| \left( 18r^2 s^2 e^{-(s/r)^2/8} + 2e^{-s^2/2} \right)\\
 &\leq \| \theta - \ts \| (36 r^4e^{-(s/r)^2/16}+2e^{-s^2/2})\\
 &\leq \| \theta - \ts \| \underbrace{38 r^4 e^{-(s/r)^2/16}}_{\gamma(s, r)/2}.
\end{align*}
(Recall that $\| M(\theta) - M(\ts) \|$ is twice as large as $\| \E [X \Delta \omega_{\theta}(X)] \|$.) The second-to-last step follows from the inequality $x e^{-x} \leq e^{-x/2}$; the last step follows from $r \geq 1$.\\

If $ s > 4r\sqrt{\log(76r^4)} \asymp r\sqrt{\log r} $, we see that $\gamma(s, r)$ is less than one.
\end{proof}

In their equation (29), \cite{Balakrishnan2014} define a ``first order stability'' condition of the form
\begin{align*}
\| \nabla Q(M(\theta) | \theta) - \nabla Q(M(\theta) | \ts) \| \leq \lambda \| \theta - \ts \|
\end{align*}
They point out in their Theorem 1 that if this stability condition holds and if $Q(\cdot | \ts)$ is $\lambda$-strongly concave over a Euclidean ball, then $M$ is contractive on that ball.

As they state, the $Q(\cdot | \ts)$ for this problem is $1$-strongly concave everywhere; in fact, the defining condition holds with equality. Checking for first order stability with $\lambda=1$ by substituting the gradient derived in (\ref{eq:q-gradient}) we find
\begin{align*}
\| \nabla Q(M(\theta) | \theta) - \nabla Q(M(\theta) | \ts) \| &= \| 2 \E Y \omega(\tfrac{\langle \theta, Y \rangle}{\sigma^2}) - 2 \E Y \omega(\tfrac{\langle \ts, Y \rangle}{\sigma^2}) \|\\
 &= \| M(\theta) - M(\ts) \|
\end{align*}
Because $M(\ts) = \ts$ in our case, Theorem~\ref{contractive} is equivalent to first order stability $\D_{1/2, r}$ when $(s, r)$ are such that $\gamma < 1$.\\

Theorem 1 from \cite{Balakrishnan2014} still holds with the Euclidean ball replaced by any set with the necessary stability and strong concavity, in our case $ \D_{1/2,r} $. Thus the framework can be applied, but Theorem~\ref{contractive} also get us directly to the destination.

Another difference is that we need to take additional steps to show that the iterations stay in the region $\D_{1/2,r}$, whereas in the Euclidean ball that was automatic. Our proof of stability was accomplished by Lemmas~\ref{inner_prod_bound} and~\ref{norm_bound}. In general, this suggests an alternative strategy for establishing contractivity, at least when $M$ has a closed form: identify regions for which $\| M(\theta) - M(\ts) \|$ can be controlled.

\subsection{Concentration of $\hat{T}$}

\begin{proof}[Proof of Proposition~\ref{concentration}]
Our strategy is to bound the moment generating function. We will show that for $ 2\sigma^2t(1+2t\|\theta^{\star}\|^2) < 1 $,
\begin{equation*}
\mathbb{E}e^{t(\|Y\|^2-d\sigma^2-\|\theta^{\star}\|^2)} \leq e^{-t d\sigma^2}(1-2\sigma^2t(1+2t\|\theta^{\star}\|^2))^{-d/2}.
\end{equation*}

Write $ Y = \sigma Z_d + \eta \theta^{\star} $, where $ \eta $ is an independent symmetric Rademacher variable and $ Z_d $ follows a $ N(0, I_d) $ distribution. Then $ \|Y\|^2 = \sigma^2\|Z_d\|^2 + 2\sigma\eta\langle Z, \theta^{\star}\rangle + \|\theta^{\star}\|^2 $. Using the inequality $ e^{x}+e^{-x} \leq 2e^{x^2/2} $, note that $ \mathbb{E}e^{2t\sigma\eta\langle Z_d, \theta^{\star}\rangle} \leq e^{2t^2\sigma^2|\langle Z_d, \theta^{\star}\rangle|^2} \leq e^{2t^2\sigma^2\|Z_d\|^2\|\theta^{\star}\|^2} $. Thus, we have shown that
\begin{equation*}
\mathbb{E}e^{t(\|Y\|^2-\|\theta^{\star}\|^2)} \leq \mathbb{E}e^{\|Z_d\|^2\sigma^2t(1+2t\|\theta^{\star}\|^2)}.
\end{equation*}
Since $ \|Z\|^2 $ follows a $ \chi^2_d $ distribution, we can use the chi-square moment generating function to write 
\begin{equation*}
\mathbb{E}e^{\|Z_d\|^2\sigma^2t(1+2t\|\theta^{\star}\|^2)} = (1-2\sigma^2t(1+2t\|\theta^{\star}\|^2))^{-d/2},
\end{equation*}
$ 2\sigma^2t(1+2t\|\theta^{\star}\|^2) < 1 $.

Using the inequality $ -\log(1-x) \leq x + 2x^2 $ for $ |x| \leq 1/2 $, we also have
\begin{equation*}
\mathbb{E}e^{t(\|Y\|^2-d\sigma^2-\|\theta^{\star}\|^2)} \leq e^{2t^2d\sigma^2\|\theta^{\star}\|^2+4\sigma^4t^2d(1+2t\|\theta^{\star}\|^2)^2},
\end{equation*}
for $ 2\sigma^2t(1+2t\|\theta^{\star}\|^2) < 1/2 $. Since $ s \geq 1 $ and $ t < 1/(8\sigma\|\theta^{\star}\|) $ also satisfy this restriction on $ t $, we have
\begin{equation*}
\mathbb{E}e^{t(\|Y\|^2-d\sigma^2-\|\theta^{\star}\|^2)} \leq e^{2d\sigma^2t^2(\|\theta^{\star}\|^2+2\sigma^2(1+s/4)^2)} \leq e^{9 d\|\theta^{\star}\|^2\sigma^2t^2}.
\end{equation*}
By the standard Chernoff method for bounded the tail of iid sums, we have
\begin{equation*}
\mathbb{P}(|\hat{T}-\|\theta^{\star}\|^2| > \epsilon) \leq 2\inf_{t<n/(8\sigma\|\theta^{\star}\|)}e^{-t\epsilon+9d\|\theta^{\star}\|^2\sigma^2t^2}.
\end{equation*}
The optimal choice of $ t $ is $ n\epsilon/(18d\sigma^2\|\theta^{\star}\|^2) $, producing a final bound of 
\begin{equation*} 
2\exp\{-n\epsilon^2/(36d\sigma^2\|\theta^{\star}\|^2)\},
\end{equation*} 
provided $ \epsilon < 5d\sigma\|\theta^{\star}\| $.
\end{proof}

\section*{Acknowledgements}

The authors would like to thank Sivaraman Balakrishnan and Andrew R. Barron for useful discussions that occurred at Yale in January 2015.

\bibliographystyle{imsart-nameyear}
\bibliography{EMReferences}

\end{document}